\documentclass[runningheads]{llncs}

\usepackage{times}
\usepackage[utf8]{inputenc} 
\usepackage[T1]{fontenc}    
\usepackage{hyperref}       
\usepackage{url}            
\usepackage{amsfonts}       
\usepackage{nicefrac}       
\usepackage{microtype}      
\usepackage{xcolor}
\usepackage{graphicx}
\usepackage{makecell}
\usepackage[ruled,vlined,linesnumbered]{algorithm2e}
\usepackage[caption=false,font=footnotesize]{subfig}
\usepackage{amsmath,amssymb}
\hypersetup{
	colorlinks   = true,
	urlcolor    = teal,
	citecolor = teal,
	linkcolor = teal
}
\newcommand{\citet}[1]{\citeauthor{#1}~\shortcite{#1}}

\begin{document}

\title{Learning and Planning in the \\Feature Deception Problem}

\author{Zheyuan Ryan Shi\inst{1} \and
Ariel D. Procaccia\inst{2} \and
Kevin S. Chan\inst{3}\and
Sridhar Venkatesan\inst{4}\and
Noam Ben-Asher\inst{3}\and
Nandi O. Leslie\inst{3}\and
Charles Kamhoua\inst{3}\and
Fei Fang\inst{1}
}

\authorrunning{Z. R. Shi et al.}
\institute{Carnegie Mellon University \and
Harvard University \and
Army Research Laboratory \and
Perspecta Labs}

\maketitle

\begin{abstract}
    Today's high-stakes adversarial interactions feature attackers who constantly breach the ever-improving security measures.
    Deception mitigates the defender's loss by misleading the attacker to make suboptimal decisions.
    In order to formally reason about deception, we introduce the \emph{feature deception problem (FDP)}, a domain-independent model and present a learning and planning framework for finding the optimal deception strategy, taking into account the adversary's preferences which are initially unknown to the defender. 
    We make the following contributions. (1) We show that we can uniformly learn the adversary's preferences using data from a modest number of deception strategies.
    (2) We propose an approximation algorithm for finding the optimal deception strategy given the learned preferences and show that the problem is NP-hard. 
    (3) We perform extensive experiments to validate our methods and results.
    In addition, we provide a case study of the credit bureau network to illustrate how FDP implements deception on a real-world problem.
\end{abstract}

\section{Introduction}

The world today poses more challenges to security than ever before.
Consider the cyberspace or the financial world where a defender is protecting a collection of targets, e.g.\ servers or accounts. Despite the ever-improving security measures, malicious attackers work diligently and creatively to outstrip the defense~\cite{potter2009CFS}. Against an attacker with previously unseen exploits and abundant resources, the attempt to protect any target is almost surely a lost cause~\cite{hurlburt2016Computer}. However, the defender could induce the attacker to attack a less harmful, or even fake, target. This can be seen as a case of deception.

Deception has been an important tactic in military operations for millenia~\cite{latimer2001deception}.
More recently, it has been extensively studied in cybersecurity~\cite{jajodia2016Springer,horak2017Gamesec}. At the start of an attack campaign, attackers typically perform reconnaissance to learn the configuration of the machines in the network using tools such as Nmap~\cite{lyon2009nmap}. Security researchers have proposed many deceptive measures to manipulate a machine's response to these probes~\cite{jajodia2017IEEE,albanese2016CD}, which could confound and mislead an attempt to attack.
In addition, honey-X, such as honeypots, honey users, and honey files have been developed to attract the attackers to attack these fake targets~\cite{spitzner2003IEEE}. For example, it is reported that country A once created encrypted but fake files with names of country B's military systems and marked them to be shared with country A's intelligence agency~\cite{nakashima2013}. Using sensitive filenames as bait, country A successfully lured country B's hackers to these decoy targets.

Be it commanding an army or protecting a computer network, a common characteristic is that the attacker gathers information about the defender's system to make decisions, and the defender can (partly) control how her system appears to the surveillance. We formalize this view, abstract the collected information about the defender's system that is relevant to attacker's decision-making as features, and propose the \emph{feature deception problem (FDP)} to model the strategic interaction between the defender and the attacker. 

It is evident that the FDP model could be applied to many domains by appropriately defining the relevant set of features. To be concrete, we will ground our discussion in cybersecurity, where an attacker observes the features of each network node when attempting to fingerprint the machines (example features shown in the left column of Table~\ref{tab:feature}) and then chooses a node to compromise. 
Attackers may have different preferences over feature value combinations when choosing targets to attack.
If an intruder has an exploit for Windows machines, a Linux server might not be attractive. If the attacker is interested in exfiltration, he might choose a machine running database services.
If the defender knows the attacker's preferences, she could strategically configure important machines appear undesirable or configure the honeypots to appear attractive to the attacker, by changing the observed value of the features, e.g.\ Table~\ref{tab:feature}. However, to make an informed decision, she needs to first learn the attacker's preferences.

\begin{table}[t] 
    \centering
    \begin{tabular}{c c c}
        \hline \textbf{Feature} & \textbf{Observed value} & \textbf{Actual value}\\ \hline
        Operating system & Windows 2016 & RHEL 7\\ 
        Service version & v1.2 & v1.4\\
        IP address & 10.0.1.2 & 10.0.2.1\\ 
        Open ports & 22, 445  & 22, 1433\\ 
        Round trip time for probes  \cite{shamsi2014ACM} & 16 ms & 84 ms \\ \hline
    \end{tabular}
            \caption{Example features in cybersecurity}
                \label{tab:feature} 
\end{table}

\textbf{Our Contributions\,\,} Based on our proposed FDP model, we provide a learning and planning framework and make three key contributions. 
First, we analyze the sample complexity of learning attacker's preferences. We prove that to learn a classical subclass of preferences that is typically used in the inverse reinforcement learning and behavioral game theory literature, the defender needs to gather only a polynomial number of data points on a linear number of feature configurations.
The proof leverages what we call the \textit{inverse feature difference} matrix (IFD), and shows that the complexity depends on the norm of this matrix. 
If the attacker is aware of the learning, they may try to interfere with the learning process by launching the data-poisoning attack, a typical threat model in adversarial machine learning. Using the IFD, we demonstrate the robustness of learning in FDP against this kind of attack.
Second, we study the planning problem of finding the optimal deception strategy against learned attacker's preferences. We show that it is NP-hard and propose an approximation algorithm.
In addition, we perform extensive experiments to validate our results. We also conduct a case study to illustrate how our FDP framework implements deception on the network of a credit bureau.

\section{The Feature Deception Problem}
In an FDP, a defender aims to protect a set $N$ of $n$ targets from an adversary. Each target $i \in N$ has a set $M$ of $m$ features.
The adversary observes these features and then chooses a target to attack. The defender incurs a loss $u_i \in [-1,1]$ if the adversary chooses to attack target $i$.\footnote{Typically, the loss $u_{i}$ is non-negative, but it might be negative if, for example, the target is set up as a decoy or honeypot, and allows the defender to gain information about the attacker.} 
The defender's objective is to minimize her expected loss.
Now, we introduce several key elements in FDP. 
We provide further discussions on some of the assumptions in FDP in the final section.

\paragraph{Features}
Features are the key element of the FDP model. Each feature
has an \textit{observed} value and an \textit{actual} value. The actual value is given and fixed, while the defender can manipulate the observed value. Only the observed values are visible to the adversary. This ties into the notion of deception, where one may think of the actual value as representing the ``ground truth'' whereas the observed value is what the defender would like the attacker to see. 
Since deception means manipulating the attacker's perceived value of a target, not the actual value, changing the observable values does not affect the defender's loss $u_i$ at each target.

Table~\ref{tab:feature} shows an example in cybersecurity. In practice, there are many ways to implement deception. 
For example, a node running Windows (actual feature) manages to reply to reconnaissance queries in Linux style using tools like OSfuscate. Then the attacker might think the node is running Linux (observed feature).
For IP deception, Jafarian et al.~\cite{jafarian2012openflow} and Chiang et al.~\cite{chiang2016acyds} demonstrate methods to present to the attacker a different IP from the actual one. 
In addition, when we ``fake open'' a port with no real vulnerable service runs on it, an attack on the underlying service will fail. This could be done with command line tools or existing technologies like Honeyd~\cite{provos2004virtual}.

\paragraph{Feature representation}
We represent the observed feature values of target $i$ by a vector $x_i = (x_{ik})_{k \in M} \in [0,1]^m$. We denote their corresponding actual values as $\hat x_i \in [0,1]^m$. 
We allow for both continuous and discrete features. In practice, we may have categorical features, such as the type of operating system, and they can be represented using one-hot encoding with binary features. 

\paragraph{Feasibility constraints}
For a feature $k$ with actual value $\hat x_{ik}$, the defender can set its observed value $x_{ik} \in C(\hat x_{ik}) \subseteq [0,1]$, where the feasible set $C(\hat x_{ik})$ is determined by the actual value. 
For continuous features, we assume $C(\hat x_{ik})$ takes the form $[\hat x_{ik} - \tau_{ik}, \hat x_{ik} + \tau_{ik}] \cap [0,1]$ where $\tau_{ik}\in[0,1]$. 
This captures the feasibility constraint in setting up the observed value of a feature based on its actual value. 
Take the round trip time (RTT) as an example. Shamsi et al. fingerprint the OS using RTT of the SYN-ACK packets~\cite{shamsi2014ACM}. Typical RTTs are in the order of few seconds (Fig. 4~\cite{shamsi2014ACM}), while a typical TCP session is ~3-5 minutes. Thus, perturbing RTT within a few seconds is reasonable, but greater perturbation is dubious.

For binary features, $C(\hat x_{ik}) \subseteq \{0,1\}$.
In addition to these feasibility constraints for individual features, we also allow for linear constraints over multiple features, which could encode natural constraints for categorical features with one-hot encoding, e.g. $\sum_{k \in M'} x_{ik} = 1$, with $M' \subseteq M$ being the subset of features that collectively represent one categorical feature. They may also encode the realistic considerations when setting up the observed features. For example, $x_{ik_{1}} + x_{ik_{2}} \leq 1$ could mean that a Linux machine $(x_{ik_1} = 1)$ cannot possibly have ActiveX available $(x_{ik_2} = 1)$.

\paragraph{Budget constraint}
Deception comes at a cost. We assume the cost is additive across targets and features: $c = \sum_{i \in N} \sum_{k \in M} c_{ik}$, where $c_{ik} = \eta_{ik} |x_{ik} - \hat x_{ik}|$. For a continuous feature $k$, $\eta_{ik}$ represents the cost associated with unit of change from the actual value to the observable value. In the example of RTT deception, defender’s cost is the packet delay which can be considered linear. If $k$ is binary, $\eta_{ik}$ defines the cost of switching states. The defender has a budget $B$ to cover these costs. 
We note that, though we introduce these explicit forms of feasibility constraints and cost structure, our algorithms in the sequel are not specific to these forms.

\paragraph{Defender strategies}
The defender's strategy is an observed feature configuration $x = \{x_i\}_{i \in N}$. 
The defender uses only pure strategies.

\paragraph{Attacker strategies}
The attacker's pure strategy is to choose a target $i \in N$ to attack. 
Since human behavior is not perfectly rational and the attacker may have preferences that are unknown to the defender a priori, we reason about the adversary using a general class of bounded rationality models. We assume the attacker's utilities are characterized by a score function $f: [0,1]^m \to \mathbb R_{>0}$ over the observed feature values of a target. 
Given observed feature configuration $x = \{x_i\}_{i \in N}$, he attacks target $i$ with probability $\frac{f(x_{i})}{\sum_{j \in N} f(x_j)}$.
$f$ may take any form and in this paper, we assume that it can be parameterized by or approximated with a neural network with parameter $w$.
In some of the theoretical analyses, we focus on a subclass of functions
\begin{equation} \label{eqn:score} 
    f_w(x_i) = \exp\left( \sum\nolimits_{k \in M} w_k x_{ik} \right).
\end{equation}
We omit the subscript $w$ when there is no confusion. This functional form is commonly used to approximate the agent's reward or utility function in inverse reinforcement learning and behavioral game theory, and has been empirically shown to capture many attacker preferences in cybersecurity~\cite{abbasi2016know}.
For example, the tactics of advanced persistent threat group APT10~\cite{pwc2017} are driven by: (1) final goal: they aim at exfiltrating data from workstation machines; (2) expertise: they employ exploits against Windows workstations; (3) services available: their exploits operate against file sharing and remote desktop services. Thus, APT10 prefer to attack machines with Windows OS running a file-sharing service on the default port. Each of these properties is a ``feature'' in FDP and a score function $f$ in Eq~\eqref{eqn:score} can assign a greater weight for each of these features.
It can also capture more complex preferences by using hand-crafted features based on domain knowledge.
For example, APT10 typically scan for NetBIOS services (i.e., ports 137 and 138), and Remote Desktop Protocol services (i.e., ports 445 and 3389) to identify systems that they might get onto~\cite{pwc2017}. Instead of treating the availability of ports as features, we may design a binary feature indicating whether each of the service is available (representing an ``OR'' relationship of the port availability features). 
We also show a more efficient way to approximately handle combinatorial preferences in Section~\ref{sec:casestudy}.
In addition, this score function also captures fully rational attackers in the limit.

The ultimate goal of the defender is to find the optimal feature configuration against an unknown attacker. This can be decomposed into two subtasks: \textit{learning} the attacker's behavior model from attack data and \textit{planning} how to manipulate the feature configuration to minimize her expected loss based on the learned preferences. In the following sections, we first analyze the sample complexity of the learning task and then propose algorithms for the planning task.

\section{Learning the Adversary's Preferences} \label{sec:samplecomplexity}
The defender learns the adversary's score function $f$ from a set of $d$ labeled data points each in the format of $(N, x, y)$ where $N$ is the set of targets and $x$ is the observed feature configuration of all targets in $N$. The label $y \in N$ indicates that the adversary attacks target $y$.

In practice, there are two ways to carry out the learning stage. First, the defender can learn from historical data.
Second, the defender can also actively collect data points while manipulating the observed features of the network. This is often done with honeynets~\cite{spitzner2003IEEE}, i.e. a network of honeypots.

No matter which learning mode we use, it is often the case, e.g. in cybersecurity, that the dataset contains multiple data points with the same $x$, since changing the defender configuration frequently leads to too much overhead.
In addition, at the learning stage, only the observed feature values $x$ matter because the attacker does not observe the actual feature values $\hat x$. 
The feasibility constraints $C(\hat x_{ik})$ on each feature still apply. Yet, they are irrelevant during learning because we use either historical data that satisfy these constraints, or honeypots for which these constraints are vacuous.

To analyze the sample complexity of learning the adversary's preferences, we focus on the classical form score function $f$ in Eq~\eqref{eqn:score}.
We show that, in an FDP with $m$ features,
the defender can learn the attacker's behavior model correctly with high probability, using only $m$ observed feature configurations and a polynomial number of samples.
We view this condition as very mild, because even if the network admin's historical dataset does not meet the requirement,
she could set up a honeynet to elicit attacks, where she can control the feature configurations of each target~\cite{spitzner2003IEEE}. It is still not free for the defender to change configurations, but attacks on honeynet do not lead to actual loss since it runs in parallel with the production network.

 To capture the multiple features in FDP, we introduce the \textit{inverse feature difference matrix} $(A^{st})^{-1}$.
Specifically, given observed feature configurations $x^1,\dots, x^m$, for any two targets $s, t \in N$, let $A^{st}$ be the $m \times m$ matrix whose $(i,j)$-entry is $a^{st}_{ij} = x^i_{sj} - x^i_{tj}$. $A^{st}$ captures the matrix-level correlation among feature configurations. We use the matrix norm of $(A^{st})^{-1}$ to bound the learning error. 

For feature configuration $x$, let $D^x(t) = \frac{f(x_t)}{\sum_{i \in N} f(x_i)}$ be the attack probability on target $t$. We assume $\rho := \min_{x, t} D^x(t) > 0$. Let $\alpha = \min_{s \neq t} ||(A^{st})^{-1}||$, where $||\cdot||$ is the matrix norm induced by the $L^1$ vector norm, i.e.\ $||(A^{st})^{-1}|| = \sup_{y \neq 0} \frac{|(A^{st})^{-1} y|}{|y|}$. Our result is stated as the following theorem.

\begin{theorem} \label{thm:samplecomplexity}
Consider $m$ observed feature configurations $x^1, x^2,\dots,$ $ x^m \in [0,1]^{mn}$. 
With $\Omega(\frac{\alpha^4 m^4}{\rho \epsilon^2} \log \frac{nm}{\delta})$ samples for each of the $m$ feature configurations, with probability $1-\delta$, we can learn a score function $\hat f(\cdot)$ with uniform multiplicative error $\epsilon$ of the true $f(\cdot)$, i.e., $\frac{1}{1 + \epsilon} \leq \frac{f(x_i)}{\hat f(x_i)} \leq 1 + \epsilon, \forall x_i$.
\end{theorem}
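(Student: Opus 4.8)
The plan is to reduce the statistical problem of learning this softmax-type behavior model to a deterministic linear-inverse problem for the weight vector $w$, and then to propagate the sampling error through that inversion. The key observation is that the normalizing constant $\sum_{j\in N} f(x_j)$ cancels in any ratio of attack probabilities: for configuration $x^i$ and any pair of targets $s,t$,
\begin{equation*}
\log\frac{D^{x^i}(s)}{D^{x^i}(t)} = \log\frac{f(x^i_s)}{f(x^i_t)} = \sum_{k\in M} w_k\,(x^i_{sk}-x^i_{tk}) = \sum_{k\in M} a^{st}_{ik}\,w_k .
\end{equation*}
Collecting these identities over the $m$ configurations $i=1,\dots,m$ and writing $b^{st}_i := \log\big(D^{x^i}(s)/D^{x^i}(t)\big)$, we obtain the exact linear system $A^{st} w = b^{st}$, so that $w = (A^{st})^{-1} b^{st}$ whenever $A^{st}$ is invertible. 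This is precisely where the inverse feature difference matrix enters, and it explains why $m$ configurations (giving $m$ independent difference equations) suffice to pin down the $m$ unknowns in $w$.

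First I would estimate each attack probability $D^{x^i}(t)$ by its empirical frequency $\hat D^{x^i}(t)$ over the samples drawn under configuration $x^i$. Since $D^{x^i}(t)\ge\rho>0$, a multiplicative Chernoff bound shows that $\Theta\big(\rho^{-1}\gamma^{-2}\log(nm/\delta)\big)$ samples per configuration force $|\hat D^{x^i}(t)/D^{x^i}(t)-1|\le\gamma$ simultaneously for all $nm$ pairs $(i,t)$ with probability $1-\delta$ (the factor $nm$ inside the logarithm comes from the union bound over targets and configurations). Forming the plug-in estimate $\hat b^{st}_i := \log\big(\hat D^{x^i}(s)/\hat D^{x^i}(t)\big)$ and using $|\log(1+z)|\le 2|z|$ for small $z$, the multiplicative accuracy of the $\hat D$'s transfers to a per-coordinate additive bound $|\hat b^{st}_i - b^{st}_i| = O(\gamma)$, hence $|\hat b^{st}-b^{st}|\le O(m\gamma)$ in the $L^1$ norm. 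The role of the assumption $\rho>0$ is exactly to keep the logarithm away from its singularity at the origin.

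Finally I would set $\hat w := (A^{st})^{-1}\hat b^{st}$ for the pair $(s,t)$ attaining the minimum norm $\alpha$, and bound the estimation error using the definition of the induced norm:
\begin{equation*}
|\hat w - w| = \big|(A^{st})^{-1}(\hat b^{st}-b^{st})\big| \le \alpha\,|\hat b^{st}-b^{st}| = O(\alpha m\gamma) .
\end{equation*}
Because every feature lies in $[0,1]$, H\"older's inequality gives, for any $x_i$, $\big|\log\big(f(x_i)/\hat f(x_i)\big)\big| = \big|(w-\hat w)\cdot x_i\big|\le |\hat w-w|$, converting the weight error directly into the claimed uniform multiplicative guarantee on $f$. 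Choosing $\gamma$ polynomially small in $\epsilon/(\alpha m)$ so that this is at most $\log(1+\epsilon)$, and substituting back into the per-configuration sample count, yields a bound of the stated form $\mathrm{poly}(\alpha,m,1/\rho,1/\epsilon)\log(nm/\delta)$. The main obstacle is this last propagation step: the two nonlinear operations, the logarithm (whose sensitivity is controlled by $\rho$) and the matrix inversion (whose amplification factor is $\alpha$), must be tracked simultaneously, and it is the bookkeeping of how $\gamma$ must shrink with $\alpha$, $m$, and $\epsilon$ -- together with any slack in the error-transfer and matrix-norm steps -- that fixes the final exponents in the sample complexity.
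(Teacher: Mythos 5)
Your proposal is correct and follows essentially the same route as the paper's proof: the same reduction of probability ratios to the linear system $A^{st}w=b^{st}$, the same Chernoff-plus-union-bound estimation of the attack distributions with the $\rho$ lower bound controlling the logarithm, and the same propagation of the error $\hat b^{st}-b^{st}$ through $(A^{st})^{-1}$. Your direct bound $|\hat w - w|\le \alpha\,|\hat b^{st}-b^{st}|$ is in fact tighter than the paper's condition-number detour (which yields an extra factor of $\alpha m$ and hence the stated $\alpha^4 m^4$), so carrying out your bookkeeping would give a sample complexity of order $\alpha^2 m^2/(\rho\epsilon^2)\cdot\log(nm/\delta)$, which still implies the theorem as stated.
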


\begin{proof}
Let $\hat D^x(t) = \frac{\hat f(x_t)}{\sum_{i \in N} \hat f(x_i)}$. We leverage a known result from behavioral game theory~\cite{haghtalab2016IJCAI}. It cannot be directly translated to sample complexity guarantee in FDP because of the correlation among feature configurations, but we use it to reason about attack probabilities in proving Theorem~\ref{thm:samplecomplexity}. 
\begin{lemma}\cite{haghtalab2016IJCAI} \label{lem:distribution}
	Given observable features $x \in [0,1]^{mn}$, and $\Omega(\frac{1}{\rho \epsilon^2} \log\frac{n}{\delta})$ samples, we have $\frac{1}{1+\epsilon} \leq \frac{\hat D^x(t)}{D^x(t)} \leq 1+\epsilon$ with probability $1-\delta$, for all $t \in N$.
\end{lemma}

	Fix $\epsilon, \delta > 0$. From Eq.~\eqref{eqn:score}, for each $x^i$ where $i = 1,2,\dots, m$, we have
	\[
	\sum_{j=1}^m w_j (x^i_{sj} - x^i_{tj}) = \ln \frac{D^{x^i}(s)}{D^{x^i}(t)}, \quad \forall s, t \in N, s \neq t
	\]
	Let 
	\[b^{st} = (\ln \frac{D^{x^1}(s)}{D^{x^1}(t)}, \dots, \ln \frac{D^{x^m}(s)}{D^{x^m}(t)})^T.\] 
	The system of equations above can be represented by $A^{st} w = b^{st}$.
	It is known that $||A^{st}|| = \max_{1 \leq j \leq m} \sum_{i=1}^m |a^{st}_{ij}|$. In our case, the feature values are bounded in $[0,1]$ and thus $|a^{st}_{ij}| \leq 1$. This yields $||A^{st}|| \leq m$.
	Now, choose $s, t$ such that $||(A^{st})^{-1}|| = \alpha$. Suppose $A^{st}$ is invertible.
	
	Let $\epsilon' = \frac{\epsilon}{4\alpha^2 m^2}$ and $\delta' = \frac{\delta}{m}$. Suppose we have $\Omega(\frac{1}{\rho \epsilon'^2} \log \frac{n}{\delta'})$ samples. From Lemma~\ref{lem:distribution}, for any node $r \in N$ and any feature configuration $x^i$ where $i=1,2,\dots, m$, $\frac{1}{1+\epsilon'} \leq \frac{\hat D^{x^i}(r)}{D^{x^i}(r)} \leq 1+\epsilon'$ with probability $1 - \delta'$. The bound holds for all strategies simultaneously with probability at least $1-m\delta' = 1-\delta$, using a union bound argument. In particular, for our chosen nodes $s$ and $t$, we have
	\[
	\frac{1}{(1+\epsilon')^2} \leq \frac{\hat D^{x^i}(s)}{\hat D^{x^i}(t)} \frac{D^{x^i}(t)}{D^{x^i}(s)}\leq (1+\epsilon')^2, \quad \forall i = 1,\dots, m
	\]
	
	Define $\hat{b}^{st}$ similarly as $b^{st}$ but using empirical distribution $\hat D$ instead of true distribution $D$. Let $e = \hat{b}^{st} - b^{st}$. Then, for each $i = 1,\dots, m$, we have 
	\[
		\hspace{-20pt}
	-2\epsilon' \leq 2\ln \frac{1}{1+\epsilon'}\leq e_i = \ln \frac{\hat D^{x^i}(s) D^{x^i}(t)}{\hat D^{x^i}(t) D^{x^i}(s)} \leq 2\ln (1+\epsilon') \leq 2\epsilon'
	\]
	Therefore, we have $|e| \leq 2\epsilon'm$. Let $\hat w$ be such that $A^{st} \hat w = \hat{b}^{st}$, i.e. $\hat w - w = (A^{st})^{-1} e$. Observe that
	\[
	\begin{split}
	&\frac{|(A^{st})^{-1} e|/|(A^{st})^{-1} b^{st}|}{|e|/|b^{st}|} \leq \max_{\tilde e, \tilde b^{st} \neq 0} \frac{|(A^{st})^{-1} \tilde e|/|(A^{st})^{-1} \tilde b^{st}|}{|\tilde e|/|\tilde b^{st}|}\\
	&= \max_{\tilde e\neq 0} \frac{|(A^{st})^{-1} \tilde e|}{|\tilde e|} \max_{\tilde b^{st} \neq 0} \frac{|\tilde b^{st}|}{|(A^{st})^{-1} \tilde b^{st}|} = \max_{\tilde e\neq 0} \frac{|(A^{st})^{-1} \tilde e|}{|\tilde e|} \max_{y \neq 0} \frac{|A^{st} y|}{|y|} = ||(A^{st})^{-1}|| \cdot ||A^{st}||
	\end{split}
	\]
	This leads to
	\[
	\begin{split}
	|(A^{st})^{-1} e| &\leq ||(A^{st})^{-1}|| \cdot ||A^{st}|| \cdot |e| \cdot \frac{|(A^{st})^{-1} b^{st}|}{|b^{st}|} \\
	&\leq ||(A^{st})^{-1}|| \cdot ||A^{st}|| \cdot |e| \cdot \max_{\tilde b^{st} \neq 0} \frac{|(A^{st})^{-1} \tilde b^{st}|}{|\tilde b^{st}|} \\
	&= ||(A^{st})^{-1}||^2 \cdot ||A^{st}|| \cdot |e| \leq \alpha^2 m (2 \epsilon' m)
	\end{split}
	\]
	For any observable feature configuration $x$,
	\[
	\begin{split}
	& \left|\left(\sum_{j=1}^m w_j x_{ij}\right) - \left(\sum_{j=1}^m \hat w_j x_{ij}\right) \right| \leq \sum_{j=1}^m |\hat{w}_j - w_j| = |(A^{st})^{-1} e| \leq \alpha^2 m (2 \epsilon' m) = \frac{\epsilon}{2}
	\end{split}
	\]
	Therefore,
	\[
	\frac{1}{1 + \epsilon} \leq \frac{f(x_i)}{\hat f(x_i)} \leq 1 + \epsilon.\qed
	\] 
\end{proof}
It is easy to see that we do not have to use the same pair of targets $(s,t)$ for every feature configuration. In fact, this result can be easily adapted to allow for each feature configuration being implemented on a different system with a different set and number of targets. Instead of defining $A^{st}$ and $b^{st}$, we could define $A$ and $b$, where row $i$ of $A$ and $i$-th entry of $b$ correspond to feature configuration $x^i$ and targets $(s^i, t^i)$. If feature configuration $x^i$ is implemented on a system with $n_i$ targets, we need $\Omega(\frac{1}{\rho \epsilon'^2} \log \frac{n_i}{\delta'})$ samples from this system, and then the argument above still holds.

The $\alpha$ in Theorem~\ref{thm:samplecomplexity} need not be large, especially if the defender can select the feature configurations to collect data and elicit preferences. Consider a sequence of $m$ feature configurations $x^1, \dots, x^m$, and focus on targets 1 and 2. For each $x^j$, let the features on target 1 be identical to target 2, except for the $j$-th feature, where $x^j_{1j} = 1$ and $x^j_{2j} = 0$.
This leads to $A^{12} = I$, and $\alpha \leq 1$.
This also shows that it is not hard to set up the configurations such that $A^{st}$ is nonsingular.

An adversary who is aware of the defender's learning procedure might sometimes intentionally attack without following his true score function $f$, to mislead the defender. The following theorem states that the defender can still learn an approximately correct $f$ even if the attacker contaminates a $\gamma$ fraction of the data.

\begin{theorem} \label{thm:poison}
In the setting of Theorem~\ref{thm:samplecomplexity},
if the attacker modifies a $\gamma \leq \frac{\epsilon \rho}{4\alpha m}$ fraction of the data points for each feature configuration, the function $f$ can be learned within multiplicative error $3\epsilon$.
\end{theorem}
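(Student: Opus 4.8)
The plan is to reuse the linear-system machinery of Theorem~\ref{thm:samplecomplexity} and to treat the poisoning as an additional, separately-controlled perturbation of the right-hand side $b^{st}$. Write $\hat D$ for the empirical attack distribution one would obtain from the genuine, i.i.d.\ samples \emph{before} tampering, and $\tilde D$ for the distribution actually observed \emph{after} the attacker corrupts a $\gamma$ fraction of them; define $\tilde b^{st}$ and $\tilde w$ from $\tilde D$ exactly as $\hat b^{st}, \hat w$ were defined from $\hat D$, so that $A^{st}\tilde w = \tilde b^{st}$. Since $A^{st}$ is fixed, $\tilde w - w = (A^{st})^{-1}(\tilde b^{st} - b^{st}) = (A^{st})^{-1}e + (A^{st})^{-1}p$, where $e = \hat b^{st} - b^{st}$ is the clean sampling error and $p = \tilde b^{st} - \hat b^{st}$ is the poisoning perturbation. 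Because $\hat D$ comes from untampered samples, Lemma~\ref{lem:distribution} applies verbatim and the first term is bounded exactly as in Theorem~\ref{thm:samplecomplexity}, giving $|(A^{st})^{-1}e| \le \frac{\epsilon}{2}$. It then remains to bound the poisoning term $|(A^{st})^{-1}p|$ by roughly $\epsilon$, so that the two contributions sum to $\frac{3\epsilon}{2}$, which translates into multiplicative error $3\epsilon$ by the same $\exp(\cdot)$ argument that turned $\frac{\epsilon}{2}$ into $1+\epsilon$ at the end of the previous proof.

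To bound $p$, I would first observe that corrupting a $\gamma$ fraction of the $S$ samples for a configuration changes the count attacking any single target by at most $\gamma S$, so $|\tilde D^{x^i}(r) - \hat D^{x^i}(r)| \le \gamma$ for every target $r$ and configuration $i$. The lower bound $\rho$ enters here: by Lemma~\ref{lem:distribution} the clean empirical probabilities satisfy $\hat D^{x^i}(r) \ge \frac{\rho}{1+\epsilon'}$, so the \emph{relative} perturbation of each probability is at most $\frac{\gamma(1+\epsilon')}{\rho}$. Writing $p_i = \ln\frac{\tilde D^{x^i}(s)}{\hat D^{x^i}(s)} - \ln\frac{\tilde D^{x^i}(t)}{\hat D^{x^i}(t)}$ and using $|\ln(1+z)| \le 2|z|$ for the (tiny) argument $z$, each coordinate obeys $|p_i| \le \frac{4\gamma(1+\epsilon')}{\rho}$, hence $|p| \le \frac{4\gamma m (1+\epsilon')}{\rho}$ in the $L^1$ norm.

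Finally I would propagate $p$ through the inverse using the operator-norm bound $|(A^{st})^{-1}p| \le \|(A^{st})^{-1}\|\,|p| = \alpha|p|$ directly, for the $(s,t)$ chosen to attain $\alpha$. Substituting the hypothesis $\gamma \le \frac{\epsilon\rho}{4\alpha m}$ gives $|(A^{st})^{-1}p| \le \alpha\cdot\frac{4\gamma m(1+\epsilon')}{\rho} \le \epsilon(1+\epsilon')$, which is $\epsilon$ up to the negligible factor $\epsilon'$. Adding the two terms yields $|\tilde w - w|_1 \le \frac{3\epsilon}{2}(1+o(1))$, so $|\sum_j (w_j - \tilde w_j) x_{ij}| \le \frac{3\epsilon}{2}$ for every feature configuration and therefore $\frac{1}{1+3\epsilon} \le \frac{f(x_i)}{\tilde f(x_i)} \le 1+3\epsilon$. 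The main obstacle — and the one step requiring care — is that the poisoning term must be controlled with only a single factor of $\alpha m$ to match the threshold $\gamma \le \frac{\epsilon\rho}{4\alpha m}$: the condition-number bound $\|(A^{st})^{-1}\|\,\|A^{st}\| = \alpha^2 m$ used for the sampling term in Theorem~\ref{thm:samplecomplexity} would be too lossy, so the argument hinges on applying the tight operator-norm bound to $p$ and on using $\rho$ to keep the attack probabilities bounded away from zero, so that a multiplicative $\gamma$ corruption produces only an additive $O(\gamma/\rho)$ shift in the log-ratios.
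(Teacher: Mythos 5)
Your proposal is correct and follows essentially the same route as the paper's proof: the same decomposition of the total error into the clean sampling error of Theorem~\ref{thm:samplecomplexity} plus a poisoning shift of $b^{st}$, the same crucial observation that the poisoning term can be propagated through $(A^{st})^{-1}$ with a single factor $\alpha$ rather than the condition-number factor $\alpha^2 m$, and the same use of $\rho$ (via Lemma~\ref{lem:distribution}) to keep the empirical attack probabilities bounded away from zero. The only difference is minor: the paper bounds each coordinate of the log-ratio shift by optimizing over how the corrupted mass splits between targets $s$ and $t$, getting a slightly sharper per-coordinate constant than your $|\ln(1+z)|\le 2|z|$ bound, but both arguments arrive at the same $3\epsilon$ guarantee.
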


\begin{proof}
Fix two nodes $s, t$. Recall that in Theorem~\ref{thm:samplecomplexity}, without data poisoning, we learned the weights $w$ by solving the linear equations $A^{st} \tilde w = \tilde b^{st}$ based on the empirical distribution of attacks, where $\tilde b^{st} = (\ln \frac{\tilde D^{x^1}(s)}{\tilde D^{x^1}(t)}, \dots, \ln \frac{\tilde D^{x^m}(s)}{\tilde D^{x^m}(t)})$.
\footnote{Refer to the proof of Theorem~\ref{thm:samplecomplexity} for the notations used.} 
Denote a parallel system of equations $A^{st} \hat w = \hat b^{st}$ which uses the poisoned data. We are interested in bounding $|\hat w - \tilde w| = |(A^{st})^{-1} (\hat b^{st} - \tilde b^{st})|$. Consider the $k$-th entry in the vector $\hat b^{st} - \tilde b^{st}$:
\[
|(\hat b^{st} - \tilde b^{st})_k| = \left|\ln \frac{\hat D^{x^k}(s)}{\hat D^{x^k}(t)} \frac{\tilde D^{x^k}(t)}{\tilde D^{x^k}(s)}\right|
\]
To simplify the notations, we denote $\tilde D^{x^k}(t) = \gamma_t^k$ and $\tilde D^{x^k}(s) = \gamma_s^k$, and without loss of generality, assume $\gamma_t^k \leq \gamma_s^k$. To find an upper bound of RHS of the above equation, we define function $g(\gamma_1, \gamma_2) = \frac{\gamma_t^k (\gamma_s^k + \gamma_1)}{\gamma_s^k (\gamma_t^k - \gamma_2)}$, and define function $h(\gamma_1, \gamma_2) = |\ln g(\gamma_1, \gamma_2)|$. The constraint that the attacker can only change $\gamma$ fraction of the points translates into $|\gamma_1|, |\gamma_2|, |\gamma_1 - \gamma_2| \leq \gamma$. Since $g$ is increasing in $\gamma_1$ and $\gamma_2$, $g$ attains maximum at $(\gamma_1, \gamma_2) = (\gamma, \gamma)$ and minimum at $(\gamma_1, \gamma_2) = (-\gamma, -\gamma)$, which are the only two possible maxima of $h$. Observe that $g(\gamma, \gamma) \geq 1$ and $g(-\gamma, -\gamma) \leq 1$. It then suffices to compare $g(\gamma, \gamma)$ with $1/g(-\gamma, -\gamma)$:
\[
\hspace{-10pt}
\frac{1/g(-\gamma, -\gamma)}{g(\gamma, \gamma)} = \frac{\gamma_s (\gamma_t + \gamma)}{\gamma_t (\gamma_s - \gamma)} \frac{\gamma_s (\gamma_t - \gamma)}{\gamma_t (\gamma_s + \gamma)}
= \frac{\gamma_s^2 \gamma_t^2- \gamma_s^2 \gamma^2}{\gamma_t^2 \gamma_s^2 - \gamma_t^2 \gamma^2}
\leq 1
\]
Therefore, $h(\gamma_1, \gamma_2)$ is maximized at $(\gamma_1, \gamma_2) = (\gamma, \gamma)$. From here, we obtain
\[
\begin{split}
|(\hat b^{st} - \tilde b^{st})_k| 
&\leq \ln \frac{(\gamma_s^k + \gamma)\gamma_t^k}{(\gamma_t^k - \gamma)\gamma_s^k} = \ln \left( \left( 1 + \frac{\gamma}{\gamma_s^k} \right) \left(1 + \frac{\gamma}{\gamma_t^k - \gamma}\right) \right) \leq \frac{\gamma}{\gamma_s^k} + \frac{\gamma}{\gamma_t^k - \gamma}.
\end{split}
\]
Recall that
\[
\frac{ \left|(A^{st})^{-1} (\hat b^{st} - \tilde b^{st}) \right|}{\left|\hat b^{st} - \tilde b^{st}\right|} \leq \sup_{y \neq 0} \frac{ \left|(A^{st})^{-1} y \right|}{\left|y\right|} = ||(A^{st})^{-1}|| = \alpha
\]
Thus, we get
\[
\begin{split}
|\hat w - \tilde w| &= |(A^{st})^{-1} (\hat b^{st} - \tilde b^{st})| \leq \alpha \left|\hat b^{st} - \tilde b^{st}\right| \leq \alpha \sum_{k=1}^m \left(\frac{\gamma}{\gamma_s^k} + \frac{\gamma}{\gamma_t^k - \gamma}\right)
\end{split}
\]
Note that by Lemma~\ref{lem:distribution}, we have $\gamma_t^k \geq \frac{\rho}{1 + \epsilon'} \geq \frac{\rho}{2}$. Since we assumed that $\gamma \leq \frac{\epsilon\rho}{4\alpha m} \leq \frac{\epsilon\rho}{4}$, we know that $\gamma \leq \gamma_t/2$. Thus, we get
\[
\hspace{-10pt}
|\hat w - \tilde w| \leq \alpha \sum_{k=1}^m \left(\frac{\gamma}{\gamma_s^k} + \frac{2\gamma}{\gamma_t^k}\right) 
\leq   \frac{3\epsilon (1+\epsilon')}{4} \leq \frac{3}{4}\epsilon \left(1+\frac{1}{4}\epsilon\right)
\]
From here, using the triangle inequality, we have
\[
|\hat w - w| \leq |\hat w - \tilde w| + |\tilde w - w| \leq \frac{3}{4}\epsilon \left(1+\frac{1}{4}\epsilon\right) + \frac{\epsilon}{2} \leq \frac{3}{2}\epsilon
\]
Thus, in the end, we get
\[
\frac{1}{1+3\epsilon} \leq \frac{f(x_i)}{\hat f(x_i)} \leq 1 + 3\epsilon. \qed
\]
\end{proof}

For a general score function $f_w$, gradient-based optimizer, e.g.\ RMSProp~\cite{hinton2012rmsprop} can be applied to learn $w$ through maximum-likelihood estimation.
\[w = \arg\max_{w'} \sum\nolimits_{j\in[d]} \left[L^j_{w'}(N^j, x^j, y^j) \right]\]
\[L^j_{w'}(N^j, x^j, y^j)= \log (f_{w'}(x^j_{y^j})) - \log (\sum\nolimits_{i \in N^j} f_{w'}(x^j_i))
\]
However, it is not guaranteed to find the optimal solution given the non-convexity of $L$. 

\section{Computing the Optimal Feature Configuration}
We now embark on our second task: 
assuming the (learned) adversary's behavior model, compute the optimal observed feature configuration to minimize the defender's expected loss.
For any score function, the problem can be formulated as the following mathematical program (MP).
\begin{align}
    \min_{x} & \quad  \frac{\sum_{i \in N} f (x_i) u_i}{\sum_{i \in N} f(x_i)}  \label{geneq1} \\
    s.t. & \quad \sum_{i \in N} \sum_{k \in M} \eta_{ik} |x_{ik} - \hat x_{ik}| \leq B \label{geneq2}\\
    & \quad \mbox{Categorical feature constraints}  \label{geneq3}\\
    & \quad x_{ik} \in C(\hat x_{ik}) \qquad \qquad \forall i \in N, k \in M \label{geneq4}
\end{align}
This MP is typically non-convex and very difficult to solve. We show that the decision version of FDP is NP-complete. Hence, finding the optimal feature configuration is NP-hard. In fact, this holds even when there is only a single binary feature and the score function $f$ takes the form in Eq.~\eqref{eqn:score}.

\begin{theorem} \label{thm:hardness}
FDP is NP-complete.
\end{theorem}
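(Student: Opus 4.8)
The plan is to prove both membership in NP and NP-hardness, and since the hardness is already witnessed by the single-binary-feature case ($m=1$ with $f$ as in Eq~\eqref{eqn:score}), I would restrict attention to that regime. Membership is immediate there: a configuration $x \in \{0,1\}^{n}$ is a polynomial-size certificate, and checking the budget constraint~\eqref{geneq2}, the feasibility and categorical constraints, and evaluating the objective~\eqref{geneq1} to compare it against a threshold $\theta$ all take polynomial time. The real work is the reduction, for which I would reduce from the decision version of \textsc{Knapsack}: given weights $\eta_1,\dots,\eta_n$, profits $p_1,\dots,p_n$, capacity $B$, and target $P$, decide whether some $S$ satisfies $\sum_{i\in S}\eta_i \le B$ and $\sum_{i\in S} p_i \ge P$.

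First I would set up the gadget. With a single binary feature and weight $w$, every target has score $1$ when its observed value is $0$ and score $a:=e^{w}$ when it is $1$; I fix $a=2$ (i.e.\ $w=\ln 2$). I create one ``item'' target per \textsc{Knapsack} item, with actual value $\hat x_i = 0$ and flip cost $\eta_{i}$, so that a defender strategy corresponds to selecting the set $S$ of items flipped to $1$; the FDP budget then coincides with the capacity $\sum_{i\in S}\eta_i\le B$, and the losses $u_i$ remain to be chosen.

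The crux is the ratio objective, whose denominator depends on $|S|$ and is not additive. The key observation is that the \emph{threshold} test linearizes: writing $N_0=\sum_i u_i$ and $D_0=n$, a short calculation shows that $\frac{\sum_i f(x_i)u_i}{\sum_i f(x_i)}\le\theta$ is equivalent to the affine condition $\sum_{i\in S}(\theta-u_i)\ge N_0-\theta D_0$ (using $a-1=1$). Setting $v_i:=\theta-u_i$ turns this into a knapsack-profit constraint $\sum_{i\in S}v_i\ge V_0$. The one difficulty is that $V_0=N_0-\theta D_0$ is forced by the $u_i$ (indeed $V_0=-\sum_i v_i$ for item-only instances) and would not match the given $P$. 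I would resolve this by adding a polynomial number of immutable ``anchor'' targets, with features locked via a singleton feasibility set $C(\hat x)=\{\hat x\}$, which shift $N_0$ and $D_0$ freely without enlarging the set of flippable items; choosing their losses lets me realize any required value of $N_0-\theta D_0$. After rescaling the profits and picking $\theta=\tfrac12$ with scaled profits in $[0,\tfrac12]$ so that every $u_i=\tfrac12-\tilde p_i$ and every anchor loss lies in $[-1,1]$, I arrange $v_i=\tilde p_i$ and $V_0=\tilde P$, the scaled profits and threshold. Then a strategy attains loss $\le\theta$ within budget exactly when the corresponding $S$ is a \textsc{Knapsack} solution, and the whole construction is computable in polynomial time with numbers of polynomial bit-length.

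I expect the main obstacle to be precisely the non-additivity introduced by the ratio and the $|S|$ term in the denominator; the linearization of the threshold test is what makes the reduction go through. The remaining bookkeeping — using anchors to decouple $V_0$ from $\sum_i v_i$, and scaling to keep all losses in $[-1,1]$ and all parameters polynomially sized — is routine once that observation is in hand.
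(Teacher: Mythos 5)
Your proposal is correct and follows essentially the same route as the paper: both reduce from the decision version of \textsc{Knapsack} using a single binary feature with scores $f(0)=1$, $f(1)=2$, exploit the fact that the ratio-threshold test $\frac{\sum_i f(x_i)u_i}{\sum_i f(x_i)}\le\theta$ linearizes to $\sum_i(\theta-u_i)f(x_i)\ge 0$, and add auxiliary target(s) to calibrate the constant term to the profit target. The only cosmetic difference is that the paper uses one extra target with loss $(1+V+\sum_i v_i)/\delta$ and a global $1/\delta$ rescaling where you use several locked ``anchor'' targets and $\theta=\tfrac12$.
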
 
\begin{proof}
We reduce from the Knapsack problem: 
given $v \in [0, 1]^n$, $\omega \in \mathbb R^n_+, \Omega, V \in \mathbb R_+$, decide whether there exists $y \in \{0,1\}^n$ such that $\sum_{i=1}^n v_i y_i \geq V$ and $\sum_{i=1}^n \omega_i y_i \leq \Omega$.

We construct an instance of FDP. Let the set of targets be $N = \{1,\dots, n + 1\}$, and let there be a single binary feature, i.e. $M = \{1\}$ and $x_{i1} \in \{0,1\}$ for each $i \in N$. Since there is only one feature, we abuse the notation by using $x_i = x_{i1}$. Suppose each target's actual value of the feature is $\hat x_{i} = 0$. Consider a score function $f$ with $f(0) = 1$ and $f(1) = 2$. For each $i \in N$, let $u_i = (1-v_i)/\delta$ if $i \neq n+1$, and $u_{n+1} = (1 + V + \sum_{i=1}^n v_i)/\delta$. Choose a large enough $\delta \geq 1$ so that $u_{n+1} \leq 1$. For each $i\in N$, let $\eta_i = \omega_i$ if $i \neq n+1$, and $\eta_{n+1} = 0$. Finally, let the budget $B = \Omega$.

For a solution $y$ to a Knapsack instance, we construct a solution $x$ to the above FDP where $x_{i} = y_i$ for $i \neq n+1$, and $x_{n+1} = 0$. 
We know $\sum_{i\in N} \eta_i |x_i - \hat x_i| = \sum_{i\in N} \eta_i x_i \leq B$ if and only if $\sum_{i=1}^n \omega_i y_i \leq \Omega$.
Since $f(x_i) > 0$ for all $x_i$, $\frac{\sum_{i\in N} f(x_{i}) u_i}{\sum_{i\in N} f(x_{i})} \leq 1/\delta$ if and only if $\sum_{i \in N} (1-\delta u_i) f(x_i) \geq 0$. Note that $\sum_{i \in N} (1-\delta u_i) = \sum_{i=1}^n v_i (y_i + 1) - \sum_{i=1}^n v_i - V$. Thus, $y$ is a certificate of Knapsack if and only if $x$ is feasible for FDP and the defender's expected loss is at most $1/\delta$.
\end{proof}

Despite the negative results for the general case, we design an approximation algorithm for the classical score function in Eq.~\eqref{eqn:score} based on mixed integer linear programming (MILP) enhanced with binary search. As shown in Sec \ref{sec:experiments}, it can solve medium sized problems (up to 200 targets) efficiently. 
Given $f(x_i) = \exp(\sum_{k \in M} w_k x_{ik})$, scaling the score by a factor of $e^{-W}$ does not affect the attack probability, where $W = |w|$ is the $L^1$ norm of $w = (w_1,\dots, w_m)$. Thus, we treat the score function as $f(x_i) = \exp(\sum_{k \in M} w_k x_{ik} - W)$.

With slight abuse of notation, we denote the score of target $i$ as $f_i$.
Let $z_i = \sum_{k \in M} w_k x_{ik} - W \in [-2W, 0]$. We divide the interval $[-2W, 0]$ into $2W/\epsilon$ subintervals, each of length $\epsilon$. On interval $[-l\epsilon, -(l-1)\epsilon]$ with $l=0,1,\dots, 2W/\epsilon$, we approximate the function $e^{z_i}$ with the line segment of slope $\gamma_l$ connecting the points $(-l\epsilon, e^{-l\epsilon})$ and $(-(l-1)\epsilon, e^{-(l-1)\epsilon})$. We use this method to approximate $f_i$ in the following mathematical program $\mathcal{MP}1$. We represent $z_i = -\sum_{l} z_{il}$, where each variable $z_{il}$ indicates the quantity $z_i$ takes up on the interval $[-l\epsilon, -(l-1)\epsilon]$.
The constraints in Eq.~\eqref{milpeq4}-\eqref{milpeq5} ensure that $z_{i(l+1)} > 0$ only if $z_{il} = \epsilon$. While $\mathcal{MP}1$ is not technically a MILP, we can linearize the objective and the constraint involving absolute value following a standard procedure~\cite{stancu2012fractional}.
The full MILP formulation can be found in Appendix~\ref{app:algs}.

\begin{align}
    (\mathcal{MP}1) & \quad
\min_{f, z, x, y} \quad  \frac{\sum_i f_i u_i}{\sum_i f_i} \label{milpeq1}\\
    s.t. \quad & f_i = e^{-2W} + \sum_{l} \gamma_{l} (\epsilon - z_{il}), \quad\forall i \in N \label{milpeq2}\\
    & \sum_{k \in M} w_k x_{ik} - W = -\sum_{l} z_{il}, \quad\forall i \in N \label{milpeq3}\\
    & \epsilon y_{il} \leq z_{il}, z_{i(l+1)} \leq \epsilon y_{il}, \quad\forall l, \forall i \in N \label{milpeq4}\\
    & z_{il} \in [0,\epsilon], y_{il} \in \{0,1\}, \quad\forall l, \forall i \in N \label{milpeq5}\\
    \nonumber & \text{Constraints~\eqref{geneq2}-\eqref{geneq4}}
\end{align}

We can now establish the following bound.

\begin{theorem} \label{thm:approx}
Given $\epsilon < 1$, the MILP is a $2\epsilon^2$-approximation to the original problem.
\end{theorem}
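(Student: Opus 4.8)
The plan is to separate the analysis into three layers: a pointwise bound on how well the piecewise-linear surrogate (the MILP variable in Eq.~\eqref{milpeq2}, which I will call $\hat f_i$) approximates the true score $f_i = e^{z_i}$; a bound on how this pointwise error propagates through the fractional objective for a \emph{single fixed} configuration; and finally a standard optimality-chaining argument relating the MILP optimum to the true optimum. Throughout I use the normalization $u_i \in [-1,1]$ and the fact that $\mathcal{MP}1$ and the original program share the feasible region (Constraints~\eqref{geneq2}--\eqref{geneq4}).

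First I would verify that Constraints~\eqref{milpeq2}--\eqref{milpeq5} force $\hat f_i$ to equal the chord interpolant $L(z_i)$ of $e^z$ at $z_i = \sum_{k\in M} w_k x_{ik} - W$. The ordering constraints~\eqref{milpeq4} guarantee that $z_{i(l+1)}>0$ only when $z_{il}=\epsilon$, so the variables $z_{il}$ fill the subintervals consecutively; substituting into \eqref{milpeq2} and telescoping the slopes $\gamma_l$ recovers exactly the value of the piecewise-linear interpolant between the sampled points $(-l\epsilon, e^{-l\epsilon})$. Since $e^z$ is convex its chord lies above it, giving $\hat f_i \ge f_i$, and the standard linear-interpolation error bound on an interval of width $\epsilon$ yields $\hat f_i - f_i \le \tfrac{\epsilon^2}{8}\max e^z \le \tfrac{\epsilon^2}{8}e^{\epsilon} f_i$. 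For $\epsilon<1$ this gives a uniform multiplicative bound $f_i \le \hat f_i \le (1+c\epsilon^2) f_i$ with $c = e/8 < \tfrac12$.

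The heart of the argument is controlling the ratio objective. Writing $\hat f_i = f_i(1+\theta_i)$ with $0\le\theta_i\le c\epsilon^2$, and setting $S=\sum_i f_i$, $T=\sum_i f_i u_i$, $\Delta_S = \sum_i f_i\theta_i$, $\Delta_T = \sum_i f_i\theta_i u_i$, a direct computation gives
\[
\hat g(x) - g(x) = \frac{S\Delta_T - T\Delta_S}{S\,(S+\Delta_S)},
\]
where $g$ and $\hat g$ are the true and surrogate objectives. Using $|u_i|\le 1$ (hence $|T|\le S$ and $|\Delta_T|\le\Delta_S$) together with $\Delta_S \le c\epsilon^2 S$, the numerator is at most $2S\Delta_S$ while the denominator is at least $S^2$, so $|\hat g(x)-g(x)| \le 2c\epsilon^2$ for every fixed feasible $x$. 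The final step is the usual chaining: if $\hat x$ minimizes $\hat g$ and $x^*$ minimizes $g$ over the common feasible region, then $g(\hat x)\le \hat g(\hat x)+2c\epsilon^2 \le \hat g(x^*)+2c\epsilon^2 \le g(x^*)+4c\epsilon^2 \le g(x^*)+2\epsilon^2$, where the last inequality uses $4c = e/2 < 2$.

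I expect the main obstacle to be the middle step: propagating the clean multiplicative error through the ratio $\tfrac{\sum_i f_i u_i}{\sum_i f_i}$. The difficulty is that the numerator can be negative (since $u_i$ may be negative for decoys), so the naive relative-error bound does not transfer to the ratio, and one must instead work with the explicit perturbation identity above. The normalization $|u_i|\le 1$ is exactly what keeps $|T|\le S$ and thereby confines the constant enough to absorb both the factor $2$ from the ratio and the factor $2$ from optimality chaining into the stated $2\epsilon^2$ bound.
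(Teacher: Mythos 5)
Your proof is correct and follows essentially the same three-step route as the paper's: a pointwise multiplicative bound $1 \le \hat f_i/f_i \le 1 + O(\epsilon^2)$ on the chord over-approximation of $e^z$, a per-configuration bound on $|\hat U(x) - U(x)|$ obtained by perturbing the ratio objective (the paper's add-and-subtract telescoping handles the possibly negative numerator just as your explicit identity does), and the standard optimality-chaining step. The only differences are cosmetic: you invoke the standard $\tfrac{h^2}{8}\max|f''|$ interpolation bound where the paper explicitly locates the maximizer of the ratio $\Delta(s)$, and your constants ($4c = e/2 < 2$) are marginally tighter but land within the same $2\epsilon^2$ guarantee.
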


\begin{proof}
To analyze the approximation bound of this MILP, we first need to analyze the tightness of the linear approximation.
Consider two points $s_1, s_2$ where $s_2 - s_1 = \epsilon$. The line segment is $t(s) = \frac{1}{\epsilon}(e^{s_2} - e^{s_1}) s - \frac{1}{\epsilon}(e^{s_2} - e^{s_1}) s_1 + e^{s_1}$. Let $\Delta(s)$ be the ratio between the line and $e^s$ on the interval $[s_1, s_2]$. It is easy to find that $\Delta(s)$ is maximized at
\[
s^* = 1 + s_1 - \frac{\epsilon}{e^{\epsilon} - 1}, \qquad \text{with} \quad    \Delta(s^*) = \frac{\frac{e^{\epsilon} - 1}{\epsilon}}{\exp\{1 - \frac{\epsilon}{e^{\epsilon} - 1}\}}.
\]
Now, let $v = \frac{e^{\epsilon} - 1}{\epsilon}$. It is known that $v \in [1,1+\epsilon]$ when $\epsilon < 1.7$. Note that $\delta(x^*) = v \exp\{\frac{1}{v}-1\} \leq 1 + (v-1)^2/2$, which holds for all $v \geq 1$. Let $\hat f(\cdot)$ be the piecewise linear approximation. For any target $i$ and observable feature configuration $x_i$, we have 
\[
\frac{\hat f(x_i)}{f(x_i)} \leq v \leq 1 + \frac{\epsilon^2}{2}.
\]

Let $x^*$ be the optimal observable features against the true score function $f$, and let $x'$ be the optimal observable features to the above MILP. Let $U(\cdot)$ be the defender's expected loss, and $\hat U(\cdot)$ be the approximate defender's expected loss. For any observable feature configuration $x$, we have
\[
\begin{split}
&|\hat U(x) - U(x)| = \left| \frac{\sum_i \hat f(x_i) u_i}{\sum_i \hat f(x_i)} - \frac{\sum_i f(x_i) u_i}{\sum_i f(x_i)} \right|\\
&= \left| \frac{\sum_i \hat f(x_i) u_i}{\sum_i \hat f(x_i)} - \frac{\sum_i \hat f(x_i) u_i}{\sum_i f(x_i)} + \frac{\sum_i \hat f(x_i) u_i}{\sum_i f(x_i)} - \frac{\sum_i f(x_i) u_i}{\sum_i f(x_i)} \right|\\
&\leq \frac{2}{\sum_i f(x_i)} \left| \sum_i f(x_i) - \sum_i \hat f(x_i) \right| = 2 \left(\frac{\sum_i \hat f(x_i)}{\sum_i f(x_i)} -1\right) \leq \epsilon^2 
\end{split}
\]
Therefore, we obtain
\[
\begin{split}
U(x') - U(x^*) &= U(x') - \hat U(x') + \hat U(x') - U(x^*)\\
&\leq U(x') - \hat U(x') + \hat U(x^*) - U(x^*)\leq 2 \epsilon^2 \qed
\end{split}
\] 
\end{proof}

\begin{algorithm}
	Initialize $L = -1, U = 1, \delta = 0, \epsilon_{bs}$\\
	\While{$U - L > \epsilon_{bs}$}{
		Solve the MILP $\mathcal{MP}1$ with objective in Eq.~\eqref{eqn:bs_obj}.\\
        \If{objective value $< 0$}{
            Let $U = \delta$
        }\Else{
            Let $L = \delta$
        }
	}
	\Return $U$, the MILP solution when $U$ was last updated.
	\caption{\textsc{Milp-bs}}\label{alg:bs}
\end{algorithm}

While $\mathcal{MP}1$ could be transformed into a MILP, the necessary linearization introduces many additional variables, increasing the size of the problem. To improve scalability, we perform binary search on the objective value $\delta$. Specifically, the objective at each iteration of the binary search becomes
\begin{equation}
    \min_{f, z, x, y} \quad \sum_i f_i u_i - \delta \sum_i f_i  \\ \label{eqn:bs_obj}.
\end{equation}
At each iteration, if the objective value of Eq.~\eqref{eqn:bs_obj} is negative, we update the binary search upper bound, and update the lower bound if positive. We proceed to the next iteration until the gap between the bounds is smaller than tolerance $\epsilon_{bs}$ and then we output the solution $x^{bs}$ when the upper bound was last updated. The complete procedure is given as Alg.~\ref{alg:bs}. Since Eq.~\eqref{eqn:bs_obj} is linear itself, we no longer need to perform linearization on it to obtain a MILP. This leads to significant speedup as we show later. We also preserve the approximation bound using triangle inequalities.

\begin{theorem} \label{thm:bs}
Given $\epsilon < 1$ and tolerance $\epsilon_\text{bs}$, binary search gives a $(2\epsilon^2 + \epsilon_\text{bs})$-approximation.
\end{theorem}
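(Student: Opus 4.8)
The plan is to piggyback on Theorem~\ref{thm:approx}, whose proof already establishes the uniform bound $|\hat U(x) - U(x)| \le \epsilon^2$ for every feasible configuration $x$, where $U(\cdot)$ is the true expected loss and $\hat U(\cdot)$ the loss under the piecewise-linear score $\hat f$. Write $\hat U^\star := \min_x \hat U(x) = \hat U(x')$ for the best achievable approximate loss ($x'$ being the MILP optimum of Theorem~\ref{thm:approx}), and let $x^*$ denote the true optimum. It suffices to show (i) that binary search isolates $\hat U^\star$ to within $\epsilon_{bs}$, and (ii) that the returned configuration $x^{bs}$ attains an approximate loss no larger than the final upper bound; the claimed $(2\epsilon^2+\epsilon_{bs})$ bound then follows from two applications of the per-point inequality.

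First I would verify that the sign test inside the loop is exact. Since $e^z$ is convex, each chord lies above the curve, so $\hat f(x_i) \ge f(x_i) > 0$ and hence $\sum_i \hat f(x_i) > 0$. Dividing the objective in Eq.~\eqref{eqn:bs_obj} by this positive quantity shows that, for a fixed threshold $\delta$, $\sum_i \hat f_i u_i - \delta \sum_i \hat f_i < 0$ holds at some feasible $x$ exactly when $\hat U(x) < \delta$. Therefore the minimum of Eq.~\eqref{eqn:bs_obj} over the MILP feasible region is negative iff $\hat U^\star < \delta$ and nonnegative iff $\hat U^\star \ge \delta$. Taking $\delta$ as the midpoint $(L+U)/2$, a straightforward induction then gives the loop invariant $L \le \hat U^\star \le U$ on the binary-search bounds $L,U$ (the base case uses $u_i \in [-1,1]$, so $\hat U^\star \in [-1,1]$). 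Upon termination $U - L \le \epsilon_{bs}$, and combining with $L \le \hat U^\star$ yields $U \le \hat U^\star + \epsilon_{bs}$.

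Next I would account for the returned solution. By construction $x^{bs}$ is the MILP minimizer recorded the last time the upper bound was lowered, i.e.\ during the iteration whose threshold $\delta$ became the terminal value of $U$; at that iteration the objective was negative at $x^{bs}$, so by the equivalence above $\hat U(x^{bs}) < \delta = U$. It then remains to chain the inequalities: using $|\hat U(\cdot) - U(\cdot)| \le \epsilon^2$ twice together with $\hat U^\star \le \hat U(x^*)$ (optimality of $\hat U^\star$ and feasibility of $x^*$),
\[
U(x^{bs}) \;\le\; \hat U(x^{bs}) + \epsilon^2 \;\le\; U + \epsilon^2 \;\le\; \hat U^\star + \epsilon_{bs} + \epsilon^2 \;\le\; \hat U(x^*) + \epsilon_{bs} + \epsilon^2 \;\le\; U(x^*) + 2\epsilon^2 + \epsilon_{bs},
\]
which is precisely the desired guarantee.

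I expect the main obstacle to be the bookkeeping of the binary search rather than any hard estimate: one must carefully separate the scalar bound $U$ from the loss function $U(\cdot)$, confirm the loop invariant is preserved under both branches of the sign test, and check that the recorded solution $x^{bs}$ really corresponds to the terminal upper bound (including the degenerate case in which the upper bound is never lowered). The analytic content—the equivalence between the sign of the linearized objective and the comparison of $\hat U(x)$ with $\delta$—hinges only on the positivity of $\sum_i \hat f_i$, which is immediate from convexity of the exponential.
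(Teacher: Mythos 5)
Your proof is correct and follows essentially the same route as the paper's: the same chain of inequalities combining the $\epsilon^2$ per-point bound from Theorem~\ref{thm:approx} (applied twice), the optimality of the MILP minimizer under $\hat f$ against $x^*$, and the $\epsilon_{bs}$ gap from binary search. The only difference is that you explicitly justify the sign-test equivalence and the loop invariant $L \le \hat U^\star \le U$, which the paper simply asserts by stating that both $x^{bs}$ and the MILP optimum lie in the terminal interval.
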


\begin{proof}
Suppose binary search terminates with interval of length $U-L \leq \epsilon_{bs}$, and observable features $x^{bs}$. Both $x^{bs}$ and the optimal observable features $x'$ to the MILP lie in this interval. This means $U(x^{bs}, \tilde f) - U(x', \tilde f) \leq \epsilon_{bs}$. Recall that $x^*$ is the optimal observable features against the true score function $f$. Therefore, we have
\[
\begin{split}
&U(x^{bs}, f) - U(x^*, f) = U(x^{bs}, f) - U(x^{bs}, \tilde f) + U(x^{bs}, \tilde f) - U(x^*, f)\\
&\leq U(x^{bs}, f) - U(x^{bs}, \tilde f) + U(x', \tilde f) + \epsilon_{bs} - U(x^*, f)\\
&\leq U(x^{bs}, f) - U(x^{bs}, \tilde f) + U(x^*, \tilde f) + \epsilon_{bs} - U(x^*, f)\\
&\leq 2\epsilon^2 + \epsilon_{bs} \qed
\end{split}
\]

\end{proof}

Now, we connect the learning and planning results together. Suppose we learned an approximate score function $\hat f$ (Theorem~\ref{thm:samplecomplexity}), and we find an approximately optimal feature configuration (Theorem~\ref{thm:approx}) assuming $\hat f$. The following result shows that we can still guarantee end-to-end approximate optimality.

\begin{theorem}
\label{thm:nearopt}
Suppose for some $\epsilon \leq 1/4$, $\frac{1}{1+\epsilon} < \frac{\hat f(x_i)}{f(x_i)} < 1 + \epsilon$ for all $x_i$.
Then, $|U(x, \hat f) - U(x, f)| \leq 4\epsilon$ for all $x$. Let $x^* = \arg\min_x U(x, f)$ and $x''$ be such that $U(x'', \hat f) \leq \min_x U(x, \hat f) + \eta$, then $U(x'', f) - U(x^*, f) \leq 8\epsilon + \eta$.
\end{theorem}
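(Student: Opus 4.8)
The plan is to prove the two claims in sequence: first the uniform pointwise closeness $|U(x,\hat f) - U(x,f)| \leq 4\epsilon$, and then the end-to-end optimality gap via a short chain of inequalities — the standard surrogate-optimization argument for passing from an approximate objective to the true one.

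For the first claim I would start from the multiplicative guarantee. Since $\frac{1}{1+\epsilon} < \frac{\hat f(x_i)}{f(x_i)} < 1+\epsilon$, the upper bound gives $\hat f(x_i) - f(x_i) < \epsilon f(x_i)$ and the lower bound gives $f(x_i) - \hat f(x_i) < \frac{\epsilon}{1+\epsilon} f(x_i) \leq \epsilon f(x_i)$, so $|\hat f(x_i) - f(x_i)| \leq \epsilon f(x_i)$ coordinatewise. Writing $S = \sum_i f(x_i)$, $\hat S = \sum_i \hat f(x_i)$, $T = \sum_i f(x_i) u_i$ and $\hat T = \sum_i \hat f(x_i) u_i$, summing yields $|\hat S - S| \leq \sum_i |\hat f(x_i) - f(x_i)| \leq \epsilon S$. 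Now I would add and subtract the hybrid term $\hat T / S$, mirroring the split used in the proof of Theorem~\ref{thm:approx}:
\[
|U(x,\hat f) - U(x,f)| = \left| \frac{\hat T}{\hat S} - \frac{T}{S} \right| \leq \left| \frac{\hat T}{\hat S} - \frac{\hat T}{S} \right| + \left| \frac{\hat T}{S} - \frac{T}{S} \right|.
\]
The first term equals $\frac{|\hat T|}{\hat S}\cdot\frac{|S-\hat S|}{S} \leq \frac{|S-\hat S|}{S} \leq \epsilon$, where I use that $\hat T/\hat S = \sum_i \frac{\hat f(x_i)}{\hat S} u_i$ is a convex combination of the $u_i \in [-1,1]$ and hence has absolute value at most $1$. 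The second term is $\frac{|\hat T - T|}{S} \leq \frac{\sum_i |\hat f(x_i) - f(x_i)|\,|u_i|}{S} \leq \frac{\epsilon S}{S} = \epsilon$. This already gives $2\epsilon$, comfortably inside the claimed $4\epsilon$.

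For the second claim I would chain the pointwise bound with the near-optimality of $x''$. Applying the first claim at $x''$ gives $U(x'',f) \leq U(x'',\hat f) + 4\epsilon$; the hypothesis on $x''$ gives $U(x'',\hat f) \leq \min_x U(x,\hat f) + \eta \leq U(x^*,\hat f) + \eta$; and applying the first claim at $x^*$ gives $U(x^*,\hat f) \leq U(x^*,f) + 4\epsilon$. Composing these,
\[
U(x'',f) \leq U(x'',\hat f) + 4\epsilon \leq U(x^*,\hat f) + \eta + 4\epsilon \leq U(x^*,f) + 8\epsilon + \eta,
\]
which is exactly $U(x'',f) - U(x^*,f) \leq 8\epsilon + \eta$.

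The argument is essentially routine once the first claim is in hand; the only place requiring care is the fractional-objective perturbation, specifically verifying that $|\hat T/\hat S| \leq 1$ so the denominator perturbation does not blow up, and checking that the two-sided multiplicative error collapses to the clean per-coordinate additive bound $|\hat f(x_i) - f(x_i)| \leq \epsilon f(x_i)$. The hypothesis $\epsilon \leq 1/4$ is not needed for my tighter $2\epsilon$ derivation; its only role is to keep $\tfrac{1}{1+\epsilon}$ near $1-\epsilon$ and ensure the stated constants remain small, so no step in the chain is delicate beyond the elementary manipulation above.
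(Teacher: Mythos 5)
Your proof is correct, and the first half takes a genuinely different route from the paper's. The paper passes through the attack distribution: it writes the multiplicative hypothesis as $|\sum_k(\hat w_k-w_k)x_{ik}|<\epsilon$ (using the exponential form of $f$), bounds $\bigl|\ln\frac{\hat D^x(t)}{D^x(t)}\bigr|<2\epsilon$, converts that to $\bigl|\frac{\hat D^x(t)}{D^x(t)}-1\bigr|\leq 4\epsilon$ (this exponentiation step is where the hypothesis $\epsilon\leq 1/4$ is actually consumed), and then sums $|\hat D^x(i)-D^x(i)|\,|u_i|$ to get the $4\epsilon$ bound; this adapts the proof of Theorem~3.7 of Haghtalab et al. You instead perturb the fractional objective directly, splitting $\frac{\hat T}{\hat S}-\frac{T}{S}$ into a denominator term (controlled by $|\hat T/\hat S|\leq 1$, valid since it is a convex combination of the $u_i\in[-1,1]$ as $\hat f>0$) and a numerator term, each at most $\epsilon$ once the two-sided multiplicative bound is collapsed to $|\hat f(x_i)-f(x_i)|\leq\epsilon f(x_i)$ — essentially the same split the paper uses in the proof of Theorem~\ref{thm:approx}. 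Your route buys three things: a tighter constant ($2\epsilon$ rather than $4\epsilon$ for the first claim, hence $4\epsilon+\eta$ for the second), no reliance on the specific exponential form of the score function (only positivity and the multiplicative hypothesis), and no need for the restriction $\epsilon\leq 1/4$. The second half — the three-step chain $U(x'',f)\leq U(x'',\hat f)+4\epsilon\leq U(x^*,\hat f)+\eta+4\epsilon\leq U(x^*,f)+8\epsilon+\eta$ — is identical in structure to the paper's.
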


\begin{proof}
Let $\hat f(x_i) = \exp(\sum_k \hat w_k x_{ik})$ and $f(x_i) = \exp(\sum_k w_k x_{ik})$. Since \[\frac{1}{1+\epsilon} < \frac{\hat f(x_i)}{f(x_i)} < 1 + \epsilon,\]
we get 
\[
\begin{split}
&-\epsilon \leq-\ln(1+\epsilon)<\sum_k (\hat w_k - w_k) x_{ik}= \ln \frac{\hat f(x_i)}{f(x_i)} < \ln(1+\epsilon) \leq \epsilon.
\end{split}
\]
That is, $|\sum_k (\hat w_k - w_k) x_{ik}| < \epsilon$. The proof of Theorem 3.7 in~\cite{haghtalab2016IJCAI} now follows to prove the first part of Theorem~\ref{thm:nearopt} if we redefine their $u_i(p_i)$ as $\sum_{k \in M} w_k x_{ik}$ and $\hat u_i(p_i)$ as $\sum_{k \in M} \hat w_k x_{ik}$. For completeness, we adapt their proof below using our notations. 

As defined in Section~\ref{sec:samplecomplexity}, $D^x(t) = \frac{f(x_t)}{\sum_i  f(x_i)}$ and $\hat D^x(t) = \frac{\hat f(x_t)}{\sum_i \hat f(x_i)}$. We have
\[
\begin{split}
&\left|\ln \frac{\hat D^x(t)}{D^x(t)}\right| = \left| \left(\sum_k (\hat w_k - w_k) x_{tk} \right) - \ln \frac{\sum_i \exp\{\sum_k \hat w_k  x_{ik} \}}{\sum_i \exp\{\sum_k w_k  x_{ik} \}} \right|\\
&\leq \left| \sum_k (\hat w_k - w_k) x_{tk} \right| + \left| \ln \frac{\sum_i \exp\{\sum_k w_k  x_{ik} \} \exp\{\sum_k (\hat w_k - w_k)  x_{ik} \}}{\sum_i \exp\{\sum_k w_k  x_{ik} \}} \right|\\
&< \epsilon + \max_i \left| \ln \exp\{\sum_k (\hat w_k - w_k)  x_{ik} \} \right|< 2\epsilon
\end{split}
\]
Using a few inequalities we can bound $\left| \frac{\hat D^x(t)}{D^x(t)} - 1 \right| \leq 4\epsilon$. This leads to, for all $x$,
\[
\begin{split}
&|U(x, \hat f) - U(x, f)| = \left| \sum_{i\in N} (\hat D^x(i) - D^x(i)) u_i \right|\leq \sum_{i\in N} \left| \hat D^x(i) - D^x(i)\right| \left| u_i \right|\\
&= \sum_{i\in N} \left| \frac{\hat D^x(i)}{D^x(i)} - 1\right| \left| u_i \right| D^x(i) \leq 4\epsilon \sum_{i\in N} \left| u_i \right| D^x(i) \leq 4\epsilon \max_{i\in N} \left| u_i \right| \leq 4\epsilon
\end{split}
\]
Let $x^* = \arg\min_x U(x, f)$ be the true optimal feature configuration, $x'= \arg\min_x U(x, \hat f)$ be the optimal configuration using the learned score function $\hat f$, and $x''$ be an approximate optimal configuration against $\hat f$, i.e., $U(x'', \hat f) \leq U(x', \hat f) + \eta$. 
We have 
\[
\begin{split}
&U(x'', f) \leq U(x'', \hat f) + 4\epsilon \leq U(x’, \hat f) + 4\epsilon + \eta \leq U(x^*, \hat f) + 4\epsilon + \eta \leq U(x^*, f) + 8\epsilon + \eta. \qed
\end{split}
\]
\end{proof}

In addition, we propose two exact algorithms for special cases of FDP, which can be found in Appendix~\ref{app:specialcase}. When the deception cost is associated with discrete features only, we provide an exact MILP formulation. When there is no budget and feasibility constraints, we can find the optimal defender strategy in $O(n \log n + m)$ time using a greedy algorithm. Inspired by this greedy algorithm, we introduce a greedy heuristic for the general case.
\textsc{Greedy} (Alg.2 in Appendix~\ref{app:algs}) finds the feature vectors that maximize and minimize the score, respectively, using gradient descent-based algorithm. It then greedily applies these features to targets of extreme losses. We show its performance in the following section as well.

\section{Experiments}
\label{sec:experiments}
We present the experimental results for our learning and planning algorithms separately, and then combine them to demonstrate the effectiveness of our learning and planning framework. All experiments are carried out on a 3.8GHz Intel Core i5 CPU with 32GB RAM. We use Ipopt as our non-convex solver and CPLEX 12.8 as the MILP solver. All results are averaged over 20 instances; error bars represent standard deviations.
Details about hyper-parameters can be found in Appendix~\ref{app:param}.

\subsection{Learning}
\paragraph{Classical score function} First, we assume the adversary uses the classical score function in Eq~\eqref{eqn:score}. The defender learns this score function using the closed-form estimation (CF) in Theorem~\ref{thm:samplecomplexity}.
We study how the learning accuracy changes with the size of training sample $d$. 
We sample the parameters of the true score function $f$ uniformly at random from $[-0.5, 0.5]$.
We then generate $m$ feature configurations uniformly at random.
For each of them, we sample the attacked target $d/m$ times according to $f$, 
obtaining a training set of $d$ samples. We generate a test set $\tilde D$ of $5\times 10^5$ configurations sampled uniformly at random. We measure the learning error as the mean total variation distance between the attack distribution from the learned $\hat f$ and that of the true model $f$:

\[
\frac{1}{|\tilde D|} \sum_{j=1}^{|\tilde D|} d_{TV}\left(\left(\frac{f(x^j_i)}{\sum_{t \in N} f(x^j_t)}\right)_{i\in N}, \left(\frac{\hat f(x^j_i)}{\sum_{t \in N} \hat f(x^j_t)}\right)_{i\in N}\right).
\]
Figure~\ref{fig:learn_simple} shows that the learning error decreases as we increase the number of samples. Theorem~\ref{thm:samplecomplexity} provides a sample complexity bound, which we annotate in Figure~\ref{fig:learn_simple} as well. The experiment shows that we need much fewer samples to learn a relatively good score function, and smaller games exhibit smaller learning error.

\begin{figure*}[t]
     \subfloat[Learning classical score function in Eq.~\eqref{eqn:score}]{\includegraphics[width=0.24\columnwidth]{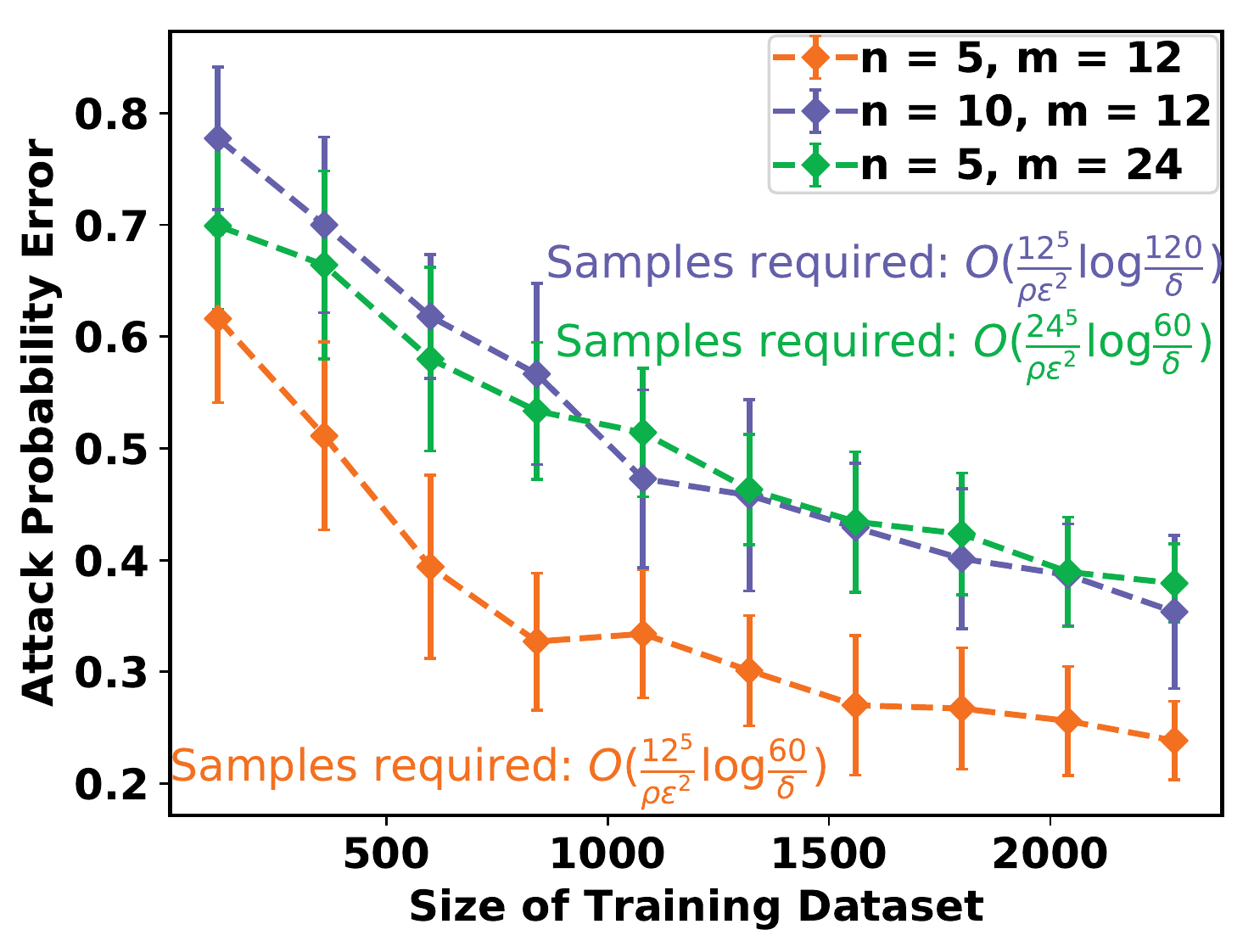}
        \label{fig:learn_simple}} \,
     \subfloat[Learning NN-3 score function]{\includegraphics[width=0.24\columnwidth]{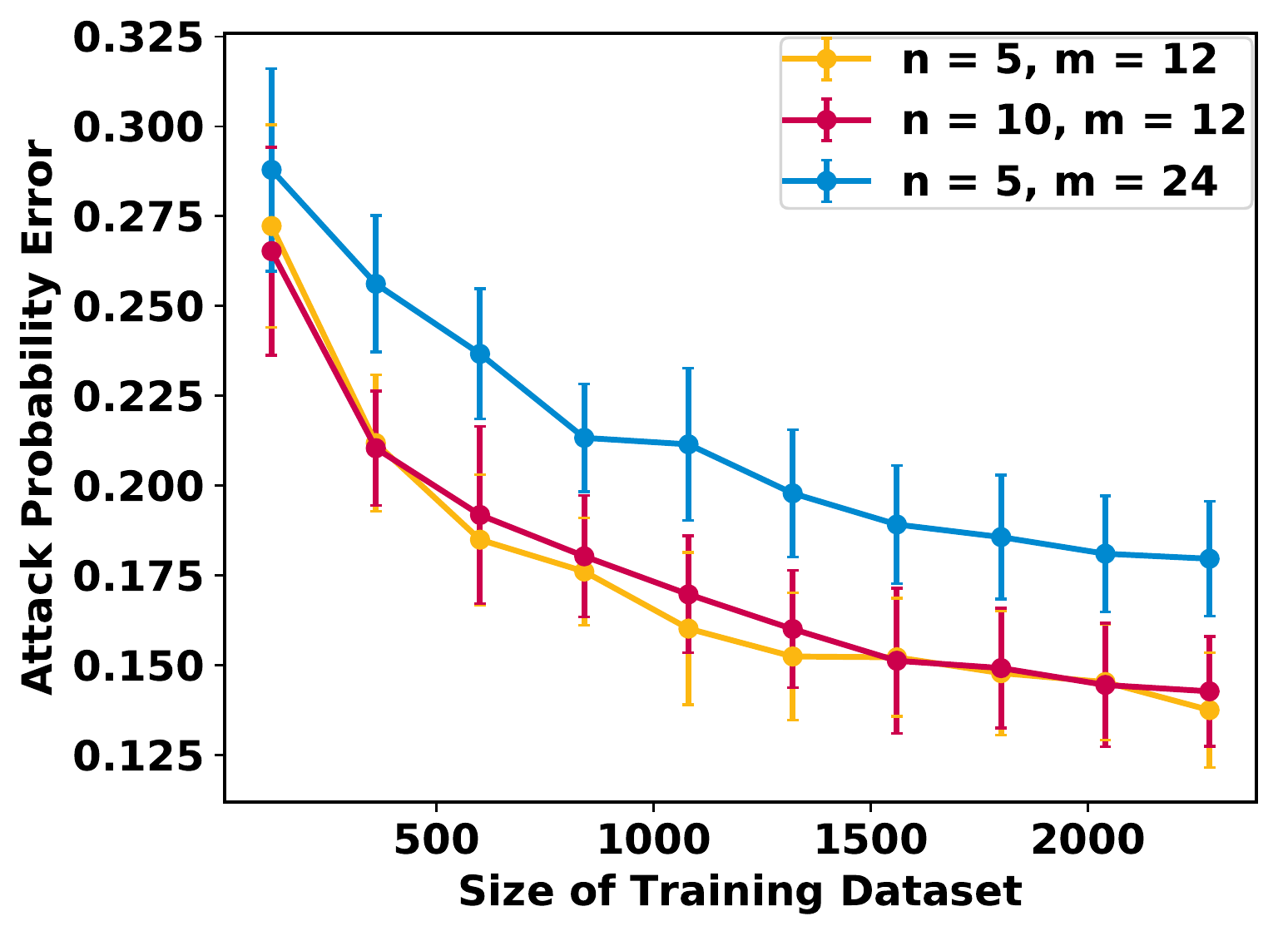}%
        \label{fig:learn_complex}} \,
    \subfloat[Planning with classical score function in Eq.~\eqref{eqn:score}, $m=12$]{\includegraphics[width=0.24\columnwidth]{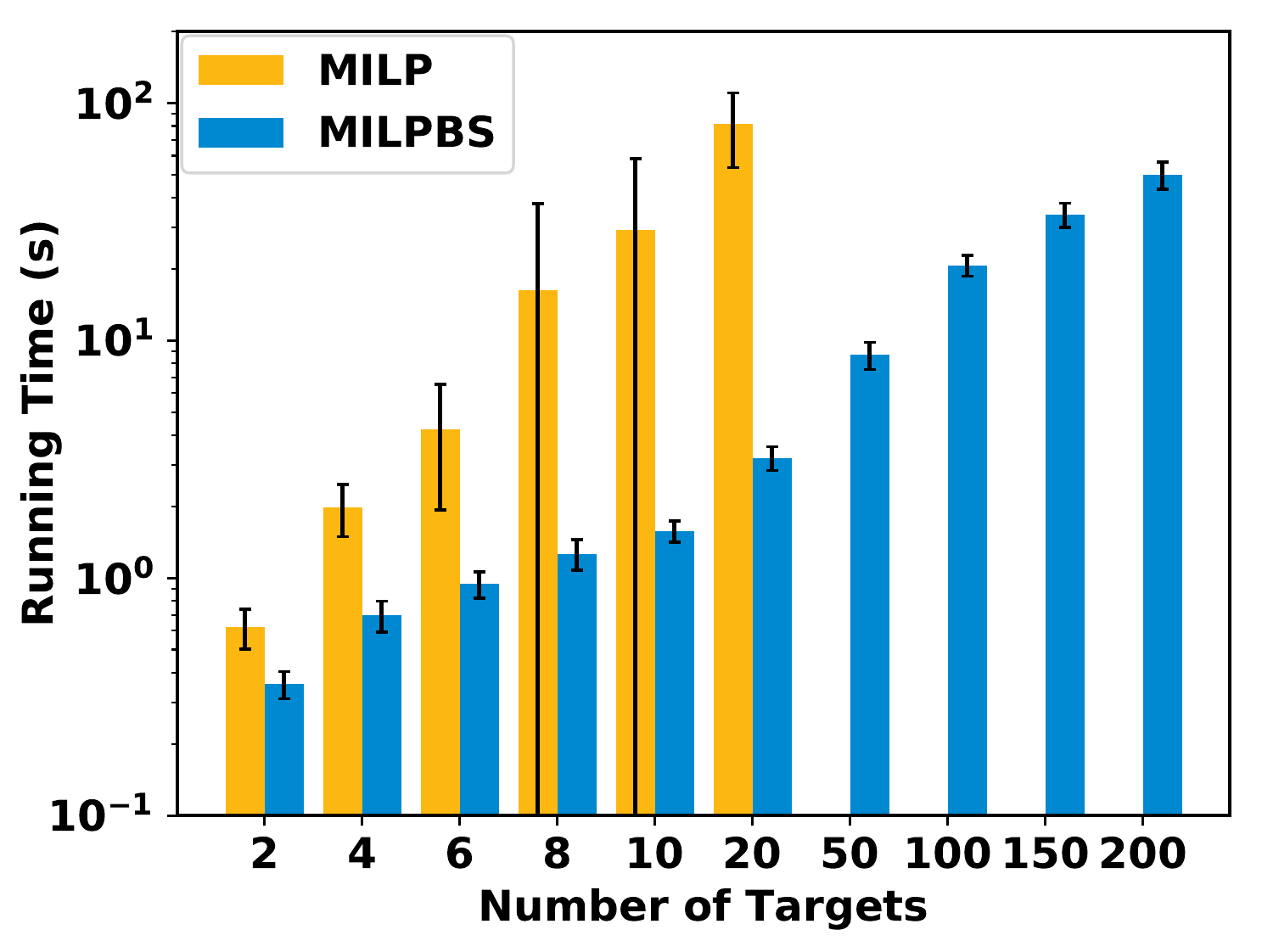}%
        \label{fig:plan_simple_general_N}} \,
     \subfloat[Planning with NN-3 score function, $m=12$]{\includegraphics[width=0.24\columnwidth]{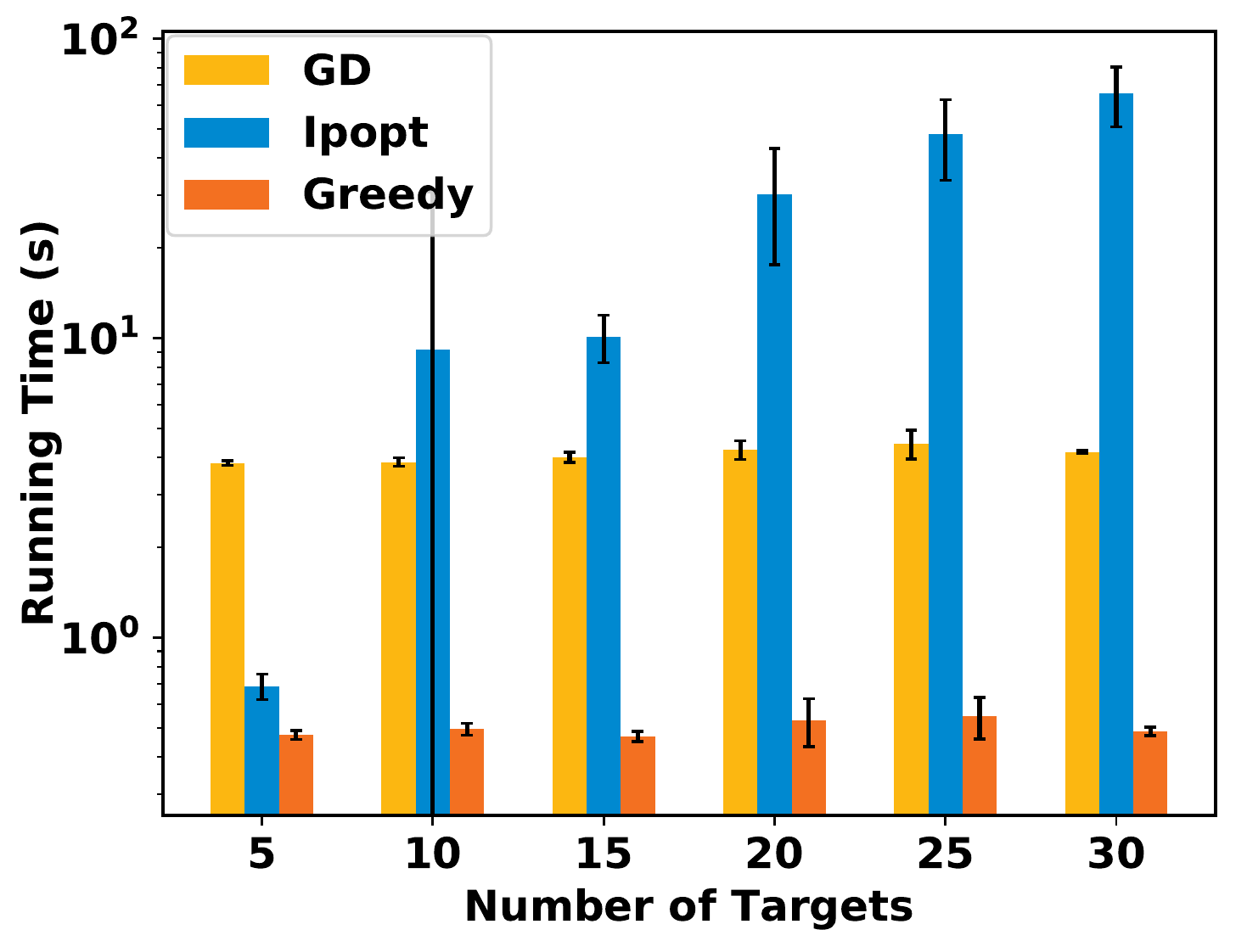}%
        \label{fig:plan_complex_runtime_N}} \\
     \subfloat[Planning with NN-3 score function, $m=12$]{\includegraphics[width=0.23\columnwidth]{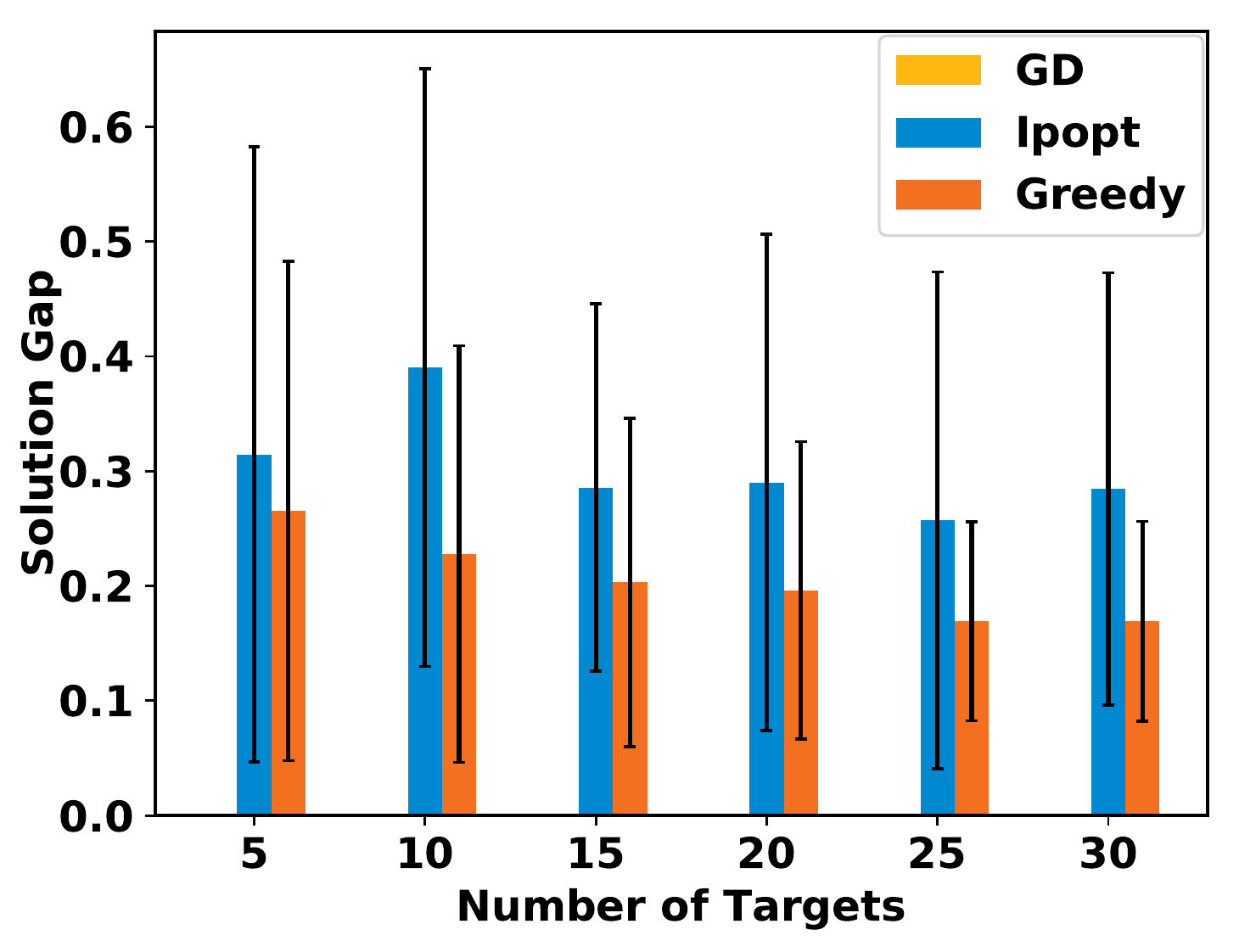}
        \label{fig:plan_complex_error_N}} \,
    \subfloat[Learning + planning, classical score function, $n=5, m=12$]{\includegraphics[width=0.24\columnwidth]{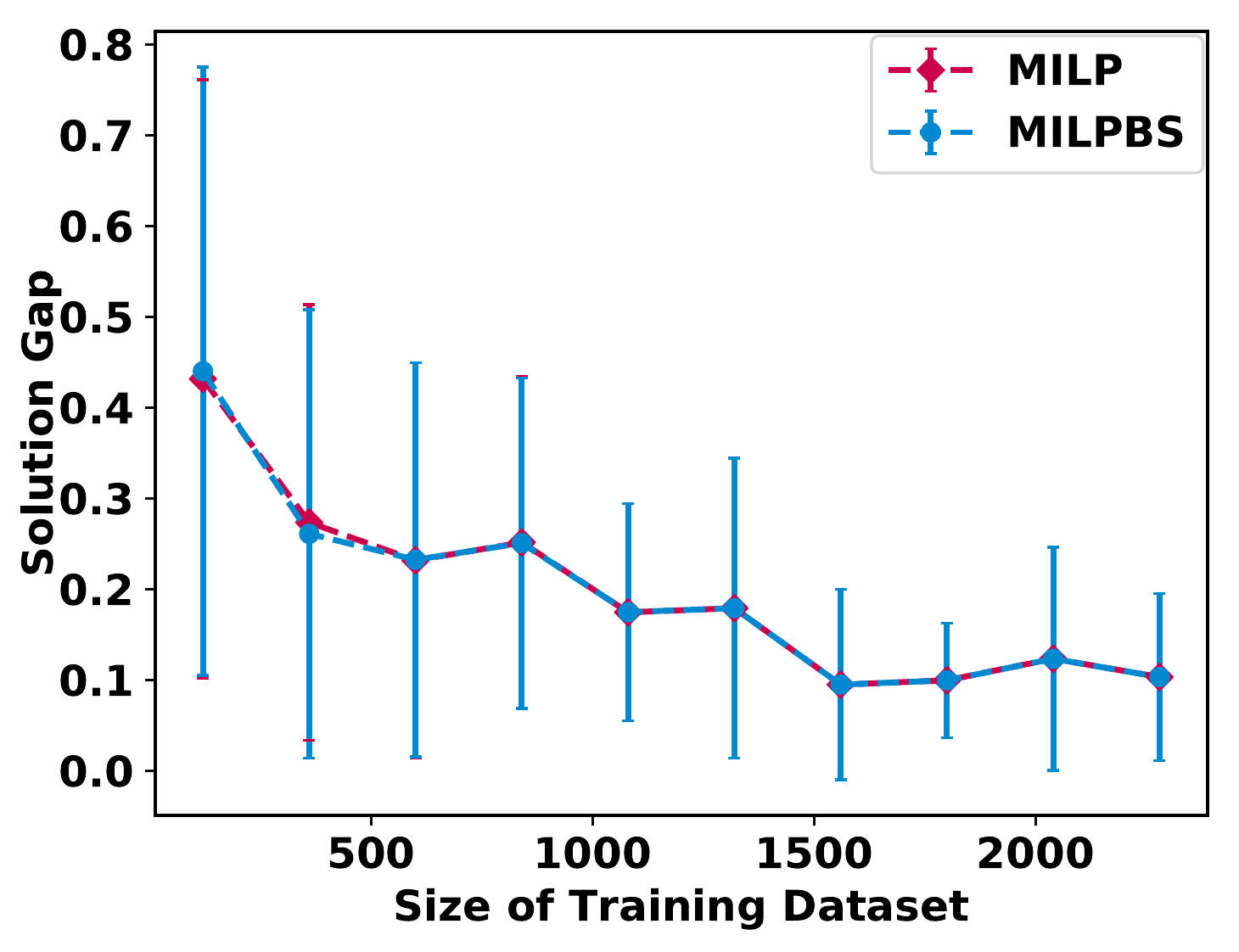}%
        \label{fig:learn_plan_simple_D}}\,
     \subfloat[Learning + Planning, classical score function, $m=12$]{\includegraphics[width=0.24\columnwidth]{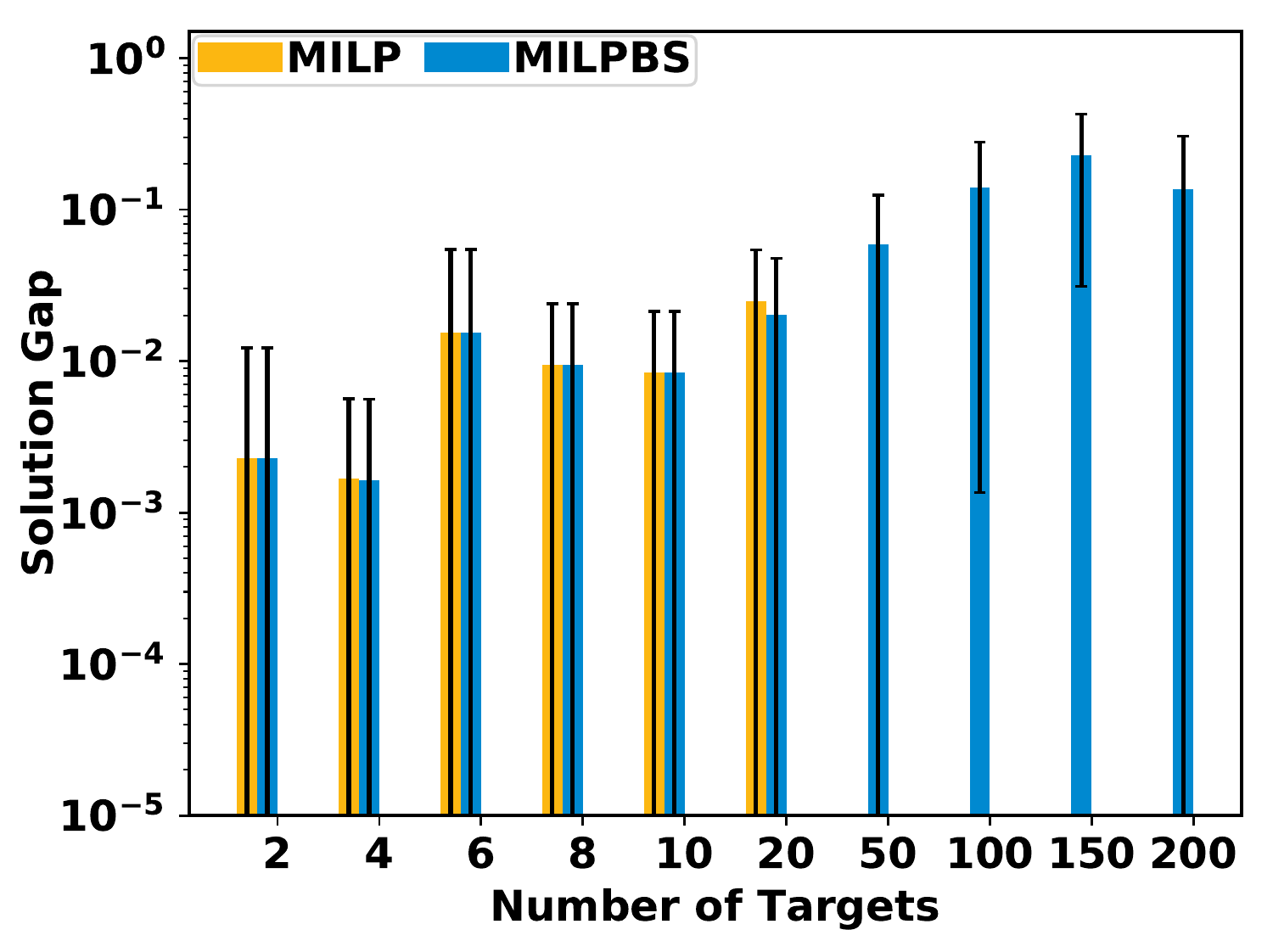}
        \label{fig:learn_plan_simple_N}}\,
    \subfloat[Learning + Planning, NN-3 score function, $n=5, m=12$]{\includegraphics[width=0.24\columnwidth]{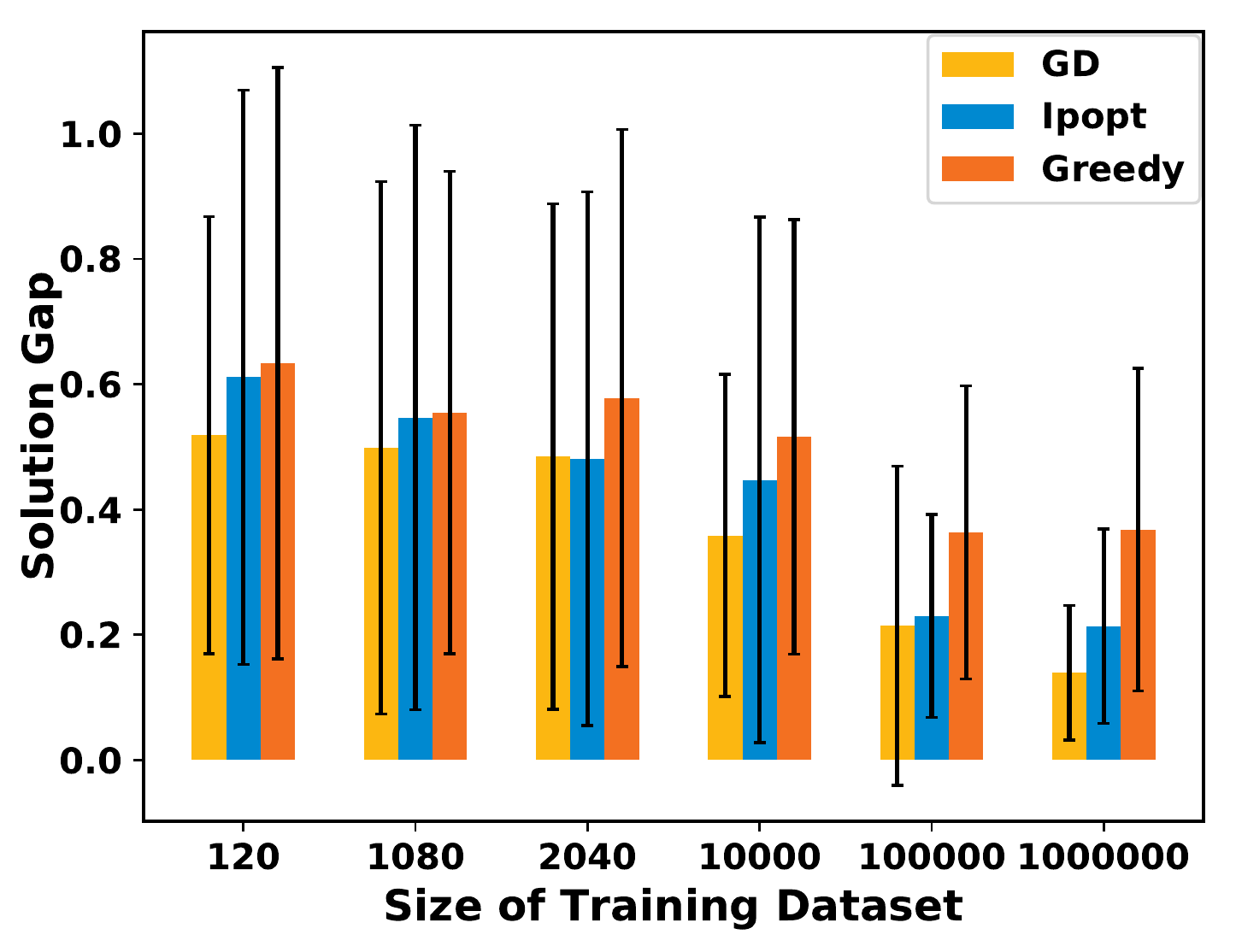}
        \label{fig:learn_plan_complex_D}}
    \caption{Experimental results}
    \label{fig:all}
\end{figure*}

\paragraph{3-layer NN represented (NN-3) score function} We assume the adversary uses a 3-layer neural network score function,
whose details are in Appendix~\ref{app:param}.
We use the gradient descent-based (GD) learning algorithm RMSProp as described in Section~\ref{sec:samplecomplexity}, with learning rate 0.1. For each sample size $d$, we generate $d$ feature configurations and sample an attacked target for each of them in the training set. 
Fig.~\ref{fig:learn_complex} shows GD can minimize the learning error to below 0.15. Note that the training data are different in Fig.~\ref{fig:learn_simple} and~\ref{fig:learn_complex}, thus the two figures are not directly comparable.

We also measured $|\hat \theta - \theta|$, the $L_1$ error in the score function parameter $\theta$, which directly relates to the sample complexity bound in Theorem~\ref{thm:samplecomplexity}. We include the results in Appendix~\ref{app:exp}.

\subsection{Planning}
We test our algorithms on finding the optimal feature configuration against a known attacker model. 
The FDP parameter distributions are included in Appendix~\ref{app:param}.

\paragraph{Classical score function}
Fig.~\ref{fig:plan_simple_general_N} shows that the binary search version of the MILP based on $\mathcal{MP}1$ (MILPBS) runs faster than that without binary search on most instances. MILPBS scales up to problems with 200 targets, which is already at the scale of many real-world problems. MILP does not scale beyond problems with 20 targets.
In Appendix~\ref{app:exp}, we show that MILPBS also scales better in terms of the number of features.
We set the MILP's error bound at 0.005 and $\epsilon_\text{bs}=1e-4$; the difference in the two algorithms' results is negligible.

\paragraph{NN-3 score function} When the features are continuous without feasibility constraints, planning becomes a non-convex optimization problem. We can apply the gradient-based optimizer or non-convex solver.
Recall that $U(x)$ is the defender's expected loss using feature configuration $x$. We measure the solution gap of alg $\in $ $\{\text{Ipopt, GD, \textsc{Greedy}}\}$ as $\frac{U(x^{\text{alg}}) - U(x^{\text{GD}})}{U(x^{\text{GD}})}$, where $x^{\text{alg}}$ is the solution from the corresponding algorithm.

Fig.~\ref{fig:plan_complex_runtime_N} and \ref{fig:plan_complex_error_N} show the running time and solution gap fixing $m= 12$. The running time of GD and \textsc{Greedy} does not change much across different problem sizes, yet Ipopt runs slower than the former two on most problem instances.
GD also has smaller solution gap than Ipopt and \textsc{Greedy}.
In Appendix~\ref{app:exp} we show the number of features affect these metrics in a similar way.

\subsection{Combining Learning and Planning}
We integrate the learning and planning algorithms to examine our full framework. The defender learns a score function $\hat f$ using algorithm $\text{L}$. Then, she uses planning algorithm $\text{P}$ to find an optimal configuration $x^{\text{L}, \text{P}}$ assuming $\hat f$. 
We measure the solution gap as $\frac{U(x^{\text{L}, \text{P}}) - U(x^*)}{U(x^*)}$, where $x^*$ is the optimal feature configuration against the true attacker model, computed using MILPBS or GD.

\paragraph{Classical score function}
We test learning algorithm $\text{CF}$ and planning algorithms $\text{P} \in \{\text{MILP, MILPBS}\}$. 
Fig.~\ref{fig:learn_plan_simple_D} shows how the solution gap changes with the size of the training dataset.
With $n \leq 20$ targets, all algorithms yield solution gaps below 0.1 (Fig.~\ref{fig:learn_plan_simple_N}).
The reader might note the overlapping error bars, which are expected since MILP and MILPBS should not differ much in solution quality.
Indeed, the difference is negligible as the smallest p-value of the 6 paired t-tests (fixing the number of targets for which they are tested) is 0.16.

\paragraph{NN-3 score function}
We test learning algorithm GD and planning algorithms $\text{P} \in \{\text{GD, Ipopt, \textsc{Greedy}}\}$. 
Fig.~\ref{fig:learn_plan_complex_D} shows how the solution gap changes with the size of training dataset $d$.
Paired t-tests suggest that GD has significantly smaller solution gap than \textsc{Greedy} ($p < 0.03$) at each size of training dataset except 1080.
Ipopt also has significantly smaller solution gap than \textsc{Greedy} ($p < 0.01$) when on large datasets with $d \geq 10^5$ samples. On the largest dataset $d = 10^6$, GD also performs significantly better than Ipopt ($p = 0.04$).

Compared to the case with classical score functions, more data are required here to achieve a small solution gap. 
Since learning error is small for both cases (Fig.~\ref{fig:learn_simple},\ref{fig:learn_complex}), this suggests planning is more sensitive to NN-3 score functions than classical score functions.

\subsection{Case Study: Credit Bureau Network}
\label{sec:casestudy}
The financial sector is a major victim of cyber attacks due to its large amount of valuable information and relatively low level of security measures. In this case study, we ground our FDP model in a credit bureau's network. We show how feature deception improves the network security when the attacker follows a domain-specific rule-based behavioral model.

We note that the purpose of this case study is not to show the scalability of our algorithm: all previous experiments fulfill that purpose. Instead, here we demonstrate why deception is useful, how our algorithm yields deception strategies reasonable in the real world, and how our algorithm capably handles an attacker which does not conform to our assumed score function.

As shown in Table~\ref{tab:casestudy}, we consider a network of 10 nodes (i.e. targets) with 6 binary features: operating system (Windows/Linux) and the availability of SMTP, NetBIOS, HTTP, SQL, and Samba services. Each node has a type of server running on it, which determines the features available on that node. Some nodes would incur a high loss if attacked, like the database servers, because for a credit bureau the safety of users' credit information is of utmost importance. Others might incur a low loss, such as the mail servers and the web server. Nodes of the same type might lead to different losses. For example, some database servers might have access to more information than others.
Each feature has different switching cost $c_k$. For the operating system, the cost is $c_k = 5$. For SQL, Samba, and HTTP services, the cost is 2. The cost is 1 for others. The defender has a budget of 10. There is no constraint on switching each individual feature, i.e. $C(\hat x_{ik}) = \{0, 1\}$. However, we impose that Windows + Samba and Linux + NetBIOS cannot be present on the same node, as it is technically impossible to do so.

\begin{table}[t] 
\centering
	\begin{tabular}{c c c c}
		\hline \textbf{Node type} & \textbf{Node ID} & \textbf{Actual features $\hat x_i$} & \textbf{Loss $u_i$} \\ \hline
		Mail server & 0, 1 & Windows, SMTP, NetBIOS & 0.1\\ 
		Web server & 2 & Windows, HTTP & 0.2\\
		App server & 3, 4 & Windows, SQL, NetBIOS & 0.3\\ 
		Database server & 5,6,7  & Linux, SQL, SMTP, Samba & 0.4\\ 
		Database server & 8,9  & Linux, SQL, SMTP, Samba & 0.8\\ \hline
	\end{tabular}
	\caption{Feature configuration of a typical credit bureau computer network.}
	\label{tab:casestudy} 
\end{table}

We demonstrate the entire learning and planning pipeline. 
We use an attacker's behavior model common in the security analysis. The attacker cares about a subset $M'\subseteq M$ of the features, and we call each such feature $k \in M'$ a requirement. The attack is uniformly randomized among the targets that satisfy the most number of requirements. Although this decision rule does not fit our classical score functions, we can approximate it by giving large weights $w_k$ to the requirement features, and 0 to the rest.

\begin{table}[t]
\centering
	\begin{tabular}{cccc}
		\cline{1-4}
		\textbf{Attacker} & \textbf{Solution $x_i$} & \textbf{Attacked nodes} & \textbf{Loss}\\
		\cline{1-4}
		APT & \thead{\normalsize Node 1: Windows $\rightarrow$ Linux \\ \normalsize  Node 1: SQL off $\rightarrow$ on  \\ \normalsize  Node 1: NetBIOS on $\rightarrow$ off   \\ \normalsize  Node 8, 9: SMTP on $\rightarrow$ off} &  \thead{\normalsize5,6,7,8,9  \\\normalsize$\rightarrow$1 ,5, 6, 7} & \thead{\normalsize $0.56 \rightarrow$ \\\normalsize$0.325$}\\
		Botnet  & \thead{\normalsize  Node 3: NetBIOS on $\rightarrow$ off   \\ \normalsize Node 4: NetBIOS on $\rightarrow$ off}  &  \thead{\normalsize0,1,3,4  \\\normalsize$\rightarrow$0,1} & \thead{\normalsize $0.2 \rightarrow$ \\\normalsize$0.1$}\\
		\hline
	\end{tabular}
	\caption{Learning + planning results for 2 types of attackers.} \label{tab:casestudyresults}
\end{table}

First, we consider an APT-like attacker, who wants to exfiltrate data by exploiting the SMTP service. They have expertise in Linux systems and want to maintain a high degree of stealth. Thus, their decision rule is based on the three requirement features: Linux, SMTP, and SQL.
Without deception, the attacker would randomize attack over nodes 5-9, because these nodes satisfy 3 requirements and other nodes satisfy at most 2.
As shown in Table~\ref{tab:casestudyresults}, the optimal solution for the learning and planning problem leads to an expected defender's loss of 0.325, which is a 42\% decrease from the loss with no deception. With limited budget, the defender makes the least harmful target, node 1, very attractive and the most harmful targets, nodes 8 and 9, less attractive.

We also consider a botnet attacker, who wants to create a bot by exploiting the NetBIOS service. They have expertise in Windows and want to maintain a moderate degree of stealth. Thus, their decision rule is based on two requirement features: Windows and NetBIOS. The results in Table~\ref{tab:casestudyresults} shows that the defender should set the NetBIOS observed value to be off for nodes 3 and 4, attracting the attacker to the least harmful nodes. This decreases the defender's expected loss by 50\% compared to not using deception.

\section{Related Work}

\paragraph{Deception}
Deception has been studied in many domains, and of immediate relevance is its use in cybersecurity~\cite{rowe2007CWCT}. Studies have suggested that deceptively responding to an attacker's scanning and probing could be a useful defensive measure~\cite{jajodia2017IEEE,albanese2016CD}. 
Schlenker et al.~\cite{schlenker2018AAMAS} and Wang and Zeng~\cite{wang2018Gamesec} propose game-theoretic models where the defender manipulates the query response to a known attacker. Proposing a domain-independent model, we advance the state of the art by (1) providing a unified learning and planning framework with theoretical guarantee which can deal with unknown attackers, (2) extending the finite ``type'' space in both papers, where ``type'' is defined by the combination of feature values, to an infinite feature space that allows for both continuous and discrete features, and (3) incorporating a highly expressive bounded rationality model whereas both papers assume perfectly rational attackers.

For the more general case, Horak et al.~\cite{horak2017Gamesec} study a defender that engages an attacker in a sequential interaction. A complementary view where the attacker aims at deceiving the defender is provided in~\cite{nguyen2019AAAI,gan2019imitative}. Different from them, we assume no knowledge of the set of possible attacker types.
In \cite{yin2014AAAI,guo2017IJCAI,nguyen2019AAAI,gan2019imitative} deception is defined as deceptively allocating defensive resources. We study feature deception where no effective tools can thwart an attack, which is arguably more realistic in high-stakes interactions. 
When such tools exist, feature deception is still valuable 
for strategic defense.

\paragraph{Learning in Stackelberg games}
Much work has been devoted to learning in Stackelberg games. Our work is most directly related to that of Haghtalab et al.~\cite{haghtalab2016IJCAI}. They show that three defender strategies are sufficient to learn a SUQR-like adversary behavior model in Stackelberg security games. The only decision variable in their model, the coverage probability, may be viewed as a single feature in FDP. FDP allows for an arbitrary number of features, and this realistic extension makes their key technique inapplicable for analyzing the sample complexity. Our main learning result also removes the technical constraints on defender strategies present in their work. Sinha et al.~\cite{sinha2016AAMAS} study learning adversary's preferences in a probably approximately correct (PAC) setting. However, their learning accuracy depends heavily on the quality of distribution from which they sample the defender's strategies. We provide a uniform guarantee in a distribution-free context. Other papers~\cite{blum2014NIPS,marecki2012AAMAS,letchford2009SAGT,peng2019AAAI} study the online learning setting with rational attackers. As pointed out in~\cite{haghtalab2016IJCAI}, considering the more realistic bounded rationality scenario allows us to make use of historical data and use our algorithm more easily in practice.

\paragraph{Planning with boundedly rational attackers}
Yang et al.~\cite{yang2012AAMAS} propose a MILP-based solution in security games. Our planning algorithm goes beyond the coverage probability and determines the configuration of multiple features, and adopt a more expressive behavior model. The subsequent papers that incorporate learning with such bounded rationality models do not provide any theoretical guarantee~\cite{yang2014AAMAS,fang2015IJCAI}. 
A recent work develops a learning and planning pipeline in security games~\cite{perrault2019decision}. However, their algorithm requires the defender know a priori some parameters in the attacker's behavior model, and provides no global optimality guarantee.

\section{Discussion}
\label{sec:discussion}
We conclude with a few remarks regarding the generality and limitations of our work. 
First, our model allows the attacker to have knowledge of deception if the knowledge is built into their behavior. For example, the attacker avoids attacking a target because it is ``too good to be true''. This can be captured by a score function that assigns a low score for such a target.

Second, our model can handle sophisticated attackers who can outstrip deception.
A singleton feasible set $C(\hat x_{ik})$ implies the defender knows the attacker can find out the actual value of a feature.
As an important next step, we will study the change of attacker’s belief of deception over repeated interactions.

Third, typically, actual features on functional targets are environmental parameters beyond the defender's control, or at least have high cost of manipulation.
Altering them and defender's losses $u_i$ does not align conceptually with deception. Thus, we treat them as fixed. For a target with no fixed actual values, e.g., a honeypot, the defender's cost is just the cost of configuring the feature, e.g., installing Windows. For consistency, we can set $\hat x_{ik}$ as the feature value with the lowest configuration cost, and $\eta_{ik}$ is the additional cost for a different feature value.

Fourth, the attacker’s preference might shift when there is a major change in security landscape, e.g. a new vulnerability disclosed. In such case, a proactive defender will recalibrate the system: recompute the attacker’s model and reconfigure the features. Moreover, exactly because the defender has learned the preferences before the change using our algorithms, the defender now knows better what qualifies as a major change. Our algorithms are fast enough for a proactive defender to run regularly.

Fifth, when faced with a group of attackers, in FDP we learn an average behavioral model of the population. To handle multiple attacker types, one could refer to the literature on Bayesian Stackelberg games~\cite{paruchuri2008AAMAS}.
 
Finally, in FDP the defender uses only pure strategies. In many domains such as cybersecurity, frequent system reconfiguration is often too costly. Thus, the system appears static to the attacker. We leave to future work to explore mixed strategies in applications where they are appropriate.


\section*{Acknowledgments}
This research was sponsored by the Combat Capabilities Development Command Army Research Laboratory and was accomplished under Cooperative Agreement Number W911NF-13-2-0045 (ARL Cyber Security CRA). The views and conclusions contained in this document are those of the authors and should not be interpreted as representing the official policies, either expressed or implied, of the Combat Capabilities Development Command Army Research Laboratory or the U.S. Government. The U.S. Government is authorized to reproduce and distribute reprints for Government purposes not withstanding any copyright notation here on.

\bibliographystyle{splncs04}
\bibliography{ijcai19}
\newpage
\appendix

\bigskip

\section{Deferred Algorithms}
\label{app:algs}
We show the MILP formulation for the mathematical program $\mathcal{MP}1$. We use $M_c \subseteq M$ to denote the set of continuous features, and $M_d = M - M_c$ denotes the set of discrete features. For discrete feature $k \in M_d$, we assume that $\eta_{ik}$ and budget $B$ have been processed such that Constraint~\eqref{geneq2} has been modified to 
\[\sum_{i \in N} \left(\sum_{k \in M_c} \eta_{ik} |x_{ik} - \hat x_{ik}| + \sum_{k \in M_d} \eta_{ik} x_{ik}\right) \leq B.
\]
This transformation based on $\hat x_{ik} \in \{0,1\}$ simplifies our presentation below.
\begin{align}
    & \max_{b,d,g,h,q,s,t,v,y} \quad  \sum_{i \in N} t_i \label{milp2eq1}\\
    & s.t. \quad  t_i = v e^{-2W} + \sum_{l} \gamma_{l} (v\epsilon - s_{il}) & \label{milp2eq2}\\
    & \sum_{k \in M_c} w_k q_{ik} + \sum_{k \in M_d} w_k b_{ik} - W v = -\sum_{l} s_{il} & \label{milp2eq3}\\
    & h_{ik} \geq q_{ik} - \hat x_{ik} v, h_{ik} \geq \hat x_{ik} v - q_{ik} & \hspace{-1cm}\forall k \in M_c \label{milp2eq4}\\
    &  \sum_{i \in N} \left(\sum_{k \in M_d} \eta_{ik} b_{ik} + \sum_{k \in M_c} \eta_{ik} h_{ik}\right) \leq B v \label{milp2eq5}\\
    & \epsilon g_{il} \leq s_{il}, s_{i(l+1)} \leq \epsilon g_{il} & \forall l \label{milp2eq6}\\
    & s_{il} \leq v\epsilon &\forall l \label{milp2eq7} \\
    & g_{il} \leq v, g_{il} \leq Z y_{il}, g_{il} \geq v - Z(1-y_{il}) &\forall l \label{milp2eq8} \\
    & b_{ik} \leq v, b_{ik} \leq Z d_{ik}, b_{il} \geq v - Z(1-d_{ik}) &\hspace{-1cm}\forall k \in M_d \label{milp2eq9} \\
    & q_{ik} \in [(\hat x_{ik} - \tau_{ik})v, (\hat x_{ik} + \tau_{ik})v] \cap [0,1]&\hspace{-1.0cm}\forall k \in M_c \label{milp2eq10} \\
    & \sum_{i \in N} u_i t_i = 1 &\label{milp2eq11}\\
    & \text{Categorical constraints} &\label{milp2eq12} \\
    & t_i, v, s_{il}, q_{ik}, h_{ik}, g_{il} \geq 0, y_{il} \in \{0,1\}& \hspace{-1.2cm}\forall k \in M_c, \forall l\label{milp2eq13}\\
    & b_{ik} \geq 0, d_{ik} \in \{0,1\} &\hspace{-1cm} \forall k \in M_d
\end{align}
We establish the variables in the MILP above with the FDP variables as below.
\begin{align}
   t_i &= \frac{f_i}{\sum_{i \in N} f_i u_i},& v &= \frac{1}{\sum_{i \in N} f_i u_i} &\\
 h_{ik} &= \frac{|x_{ik} - \hat x_{ik}|}{\sum_{i \in N} f_i u_i},& q_{ik} &= \frac{x_{ik}}{\sum_{i \in N} f_i u_i}, &\forall k \in M_c\\
 d_{ik} &= x_{ik},& b_{ik} &= \frac{x_{ik}}{\sum_{i \in N} f_i u_i}, &\forall k \in M_d\\
 s_{il} &= \frac{z_{il}}{\sum_{i \in N} f_i u_i},& g_{il} &= \frac{y_{il}}{\sum_{i \in N} f_i u_i}, & \forall l
\end{align}
All equations above involving index $i$ without summation should be interpreted as applying to all $i \in N$.

\begin{algorithm}
	Use gradient-based method to find $x^{max} \approx \arg\max_{x} f(x)$ and $x^{min} \approx \arg\min_{x} f(x)$.\\
	Sort the targets such that $u_1 \leq u_2 \leq \cdots \leq u_n$.\\
	Initialize $i=1, j=n$.\\
	\While{$i < j$ and budget $> 0$}{
		Let $x_i \leftarrow x^{max}$ if \\
        \If{Cost$(x_i \leftarrow x^{max}) \leq$ remaining budget}{
            $x_i \leftarrow x^{max}$, decrease the budget, $i = i + 1$.
        }
        \If{Cost$(x_j \leftarrow x^{min}) \leq$ remaining budget}{
            $x_j \leftarrow x^{min}$, decrease the budget, $j = j - 1$.
        }
	}
	\Return feature configuration $x$
	\caption{\textsc{Greedy}}\label{alg:greedy}
\end{algorithm}

\section{Exact Algorithms for Special Cases} \label{app:specialcase}
\subsection{Deception cost on discrete features} \label{app:specialcase:costdiscrete}
In our first attempt at exact algorithms, we assume the cost of deception is only associated with discrete features, i.e. $\eta_{ik} = 0$ if $k$ is a continuous feature. 

Recall that we use $M_c \subseteq M$ to denote the set of continuous features, and $M_d = M - M_c$ denotes the set of discrete features.
Suppose $x_i^d = (x_{ik})_{k \in M_d}$ and $x_i^c = (x_{ik})_{k \in M_c}$, and let $x_i = (x_i^d, x_i^c)$ be the observable features decomposed into discrete features and continuous features.
Our score function $f(x_i) = \exp\{\sum_{k \in M} w_k x_{ik}\}$ can be factorized into $f(x_i) = f_d(x_i^d) f_c(x_i^c)$, where $f_d, f_c$ are scores considering discrete and continuous features only, respectively.

Let $A^d_i = \{x_i^{d_j}: j = 1,\dots, k\}$ be the finite set of possible discrete observable feature combinations at target $i$. Each $x_i^{d_j} \in \{0,1\}^{m_d}$ is a $m_d$-dimensional vector, where $m_d = |M_d|$ is the number of discrete features in FDP.  Based on the hidden discrete features $\hat x^d_{i}$ and the feasible regions $A(\hat x_{ik})$, we can compute both $A^d_i$ and all possible scores $f^d_{ij} = f_d(x_i^{d_j})$. Similarly, we can compute the interval $[\alpha_{i}, \beta_{i}]$ in which the continuous score $f^c_{i}$ could possibly lie, since each continuous feature $x_{ik}$ can take value in an interval $A(\hat x_{ik})$. 

Subsequently, we formulate the following mathematical program. The binary variable $y_{ij} = 1$ if target $i$'s discrete observable features are the $j$-th combination in $A^d_i$, that is, $x_i^d = x_i^{d_j} \in A^d_i$. The cost $c_{ij}$ for setting the discrete observable features to $x_{i}^{d_j}$ could be computed accordingly.

\begin{align*}
    \min_{y, f^c} \qquad & \frac{\sum_{i \in N} \sum_{j =1}^k u_i f^d_{ij} y_{ij} f^c_{i}}{\sum_{i \in N} \sum_{j =1}^k f^d_{ij} y_{ij} f^c_{i}} & \\
    s.t. \qquad & \sum_{j =1}^k y_{ij} = 1 & \forall i \in N\\
    & \sum_{i \in N} \sum_{j =1}^k c_{ij} y_{ij} \leq B & \\
    & f^c_{i} \in [\alpha_{i}, \beta_{i}] & \forall i \in N\\
    & y_{ij} \in \{0,1\} & \forall i \in N, j \in [k]
\end{align*}
We may apply the same linearization method as before to obtain a MILP.

Solving this MILP yields the optimal discrete feature configuration, as well as the optimal scores $f_i^c$'s of the continuous features. One may then solve the system of linear equations $\ln f_i^c = \sum_{k \in M} w_k x_{ik}$ for $i \in N$ for the optimal continuous features. Since the feasible regions of the continuous features are connected, there exists at least one solution to the system of equations. We remark that when all features are continuous, the above approach finds the optimal feature configuration in polynomial time.

\subsection{No budget and feasibility constraints}   \label{app:specialcase:nobudgetfeasibility}
We present an efficient algorithm when the defender has no budget and feasibility constraints. 
\cite{schlenker2018AAMAS} proposed a greedy algorithm in a similar setting (with feasibility constraints), whose complexity is polynomial in the size of ``type space'', which is still exponential in the representation of FDP, not to mention that with a single continuous feature the size of our ``type space'' becomes infinite. Furthermore, the probabilistic behavior of the attacker in FDP makes their key strategy inapplicable.

Since the defender aims at minimizing her expected loss, one intuitive idea is to give the lowest score to the target with the highest loss $u_i$. In fact, we show below that the defender should configure the features at each target in only two possible ways: the ones which maximizes or minimizes the score. First, we assume that the defender's losses have been sorted in ascending order $u_1 \leq u_2 \leq \cdots \leq u_n$.
\begin{lemma} \label{lem:sort}
Let $x = (x_1,\dots, x_n)$ be an optimal observable feature configuration. There exists a permutation $\sigma$ on $N$ where $x^\sigma = (x_{\sigma(1)},\dots, x_{\sigma(n)})$ is also an optimal observable feature configuration, and
\[
f(x_{\sigma(1)}) \geq f(x_{\sigma(2)}) \geq \cdots \geq f(x_{\sigma(n)}).
\]
In particular, if $u_1 < u_2 < \cdots <u_n$, $\sigma$ can be the identity permutation.
\end{lemma}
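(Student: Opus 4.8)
The plan is to prove Lemma~\ref{lem:sort} by a rearrangement (exchange) argument, exploiting the key feature of this special case: with no budget and no feasibility constraints, the feature vectors assigned to the targets may be permuted freely among targets while staying feasible. Concretely, since every target's observed features range over the same set (all of $[0,1]^m$), if $x=(x_1,\dots,x_n)$ is feasible then so is $x^\sigma=(x_{\sigma(1)},\dots,x_{\sigma(n)})$ for every permutation $\sigma$ on $N$. The single score function $f$ is shared across targets, so permuting the assignment only permutes which loss $u_i$ is multiplied by which score $f(x_i)$.

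I would then note that the denominator $\sum_{i\in N} f(x_i)$ of the defender's loss $U(x)=\frac{\sum_{i\in N} f(x_i)u_i}{\sum_{i\in N} f(x_i)}$ is invariant under such permutations. Hence, over all permutations of a fixed collection of feature vectors, minimizing $U$ is equivalent to minimizing the numerator $\sum_{i\in N} u_i\, f(x_{\sigma(i)})$. With the losses sorted as $u_1\le\cdots\le u_n$, the rearrangement inequality shows that this numerator is minimized exactly when the scores $f(x_{\sigma(i)})$ are placed in the opposite (descending) order. Choosing $\sigma$ to be such a sorting permutation yields a configuration $x^\sigma$ with $f(x_{\sigma(1)})\ge\cdots\ge f(x_{\sigma(n)})$ and $U(x^\sigma)\le U(x)$. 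Since $x$ is assumed optimal, we get $U(x^\sigma)=U(x)$, so $x^\sigma$ is also optimal and has the claimed ordering.

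For the strict case $u_1<\cdots<u_n$, I would show the given optimal $x$ must \emph{already} be sorted, so that $\sigma$ can be the identity. Suppose for contradiction that $f(x_i)<f(x_j)$ for some $i<j$. Swapping the feature vectors at positions $i$ and $j$ changes the numerator by $(f(x_j)-f(x_i))(u_i-u_j)$, which is strictly negative because $f(x_j)-f(x_i)>0$ and $u_i-u_j<0$. This strictly decreases $U(x)$ (the denominator being unchanged), contradicting optimality of $x$. Therefore $f(x_1)\ge\cdots\ge f(x_n)$ and the identity permutation works.

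The only real obstacle is justifying that permuting feature vectors among targets preserves feasibility; this is precisely where the no-constraints assumption is indispensable, since with target-specific feasible sets $C(\hat x_{ik})$ a permuted assignment need not remain feasible. Once permutation-closedness is in hand, the remainder is a standard rearrangement-inequality argument, and the strict-ordering conclusion follows from the elementary pairwise exchange computation above.
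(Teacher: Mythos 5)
Your proof is correct and follows essentially the same route as the paper's: the core step in both is the pairwise-exchange identity $(f(x_j)-f(x_i))(u_i-u_j)\leq 0$ applied to the numerator while the denominator $\sum_i f(x_i)$ stays fixed, with your invocation of the rearrangement inequality being just the iterated form of the paper's swap-an-inversion argument. You additionally make explicit two points the paper leaves implicit — that permuting feature vectors preserves feasibility only because there are no budget or target-specific feasibility constraints, and that the denominator is permutation-invariant — which is a welcome clarification but not a different proof.
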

\begin{proof}
We prove by contradiction. Suppose $i < j$ and $f(x_i) < f(x_j)$. We have
\[
\begin{split}
&(f(x_j) u_i + f( x_i) u_j) - (f( x_i) u_i + f(x_j) u_j) \\
&= (f( x_j) - f(x_i)) (u_i - u_j) \leq 0
\end{split}
\]
and the inequality is strict if $u_i < u_j$.
Thus, when $u_i < u_j$, if we swap the features on target $i$ and $j$, we would strictly improve the solution, which contradicts $x$ being optimal. When $u_i = u_j$, we could swap the observed features on node $i$ and $j$, and strictly decrease the number of score inversions.
\end{proof}

\begin{lemma} \label{lem:extreme}
There exists an optimal observable feature configuration $x = (x_1,\dots, x_n)$ such that, for some $j \in N-\{n\}$, if $i \leq j$, $f(x_i) = \max_{ x_i'} f( x_i')$; otherwise, $f(x_i) = \min_{x_i'} f(x_i')$.
\end{lemma}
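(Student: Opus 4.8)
The plan is to exploit the absence of budget and feasibility constraints, which completely decouples the targets. Since $f$ is continuous on the compact, connected domain $[0,1]^m$, the extreme value theorem makes $f_{\max}=\max_{x'}f(x')$ and $f_{\min}=\min_{x'}f(x')$ attainable, and the intermediate value theorem makes \emph{every} value in $[f_{\min},f_{\max}]$ attainable at each target. Hence I can treat the problem purely in terms of the score vector $(f_1,\dots,f_n)\in[f_{\min},f_{\max}]^n$, where $f_i=f(x_i)$, with objective $U=\frac{\sum_i f_i u_i}{\sum_i f_i}$, and it suffices to establish the threshold structure for a minimizing score vector.

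The key observation is that $U$ is a linear-fractional (Möbius) function of each coordinate when the others are held fixed. Writing $A=\sum_{r\ne i}f_r u_r$ and $B=\sum_{r\ne i}f_r$, we have $U=\frac{f_i u_i+A}{f_i+B}$, whose derivative in $f_i$ equals $\frac{u_iB-A}{(f_i+B)^2}$; the numerator $u_iB-A$ does not depend on $f_i$ and $B>0$ since $f>0$, so $U$ is monotone in $f_i$ and its minimum over $[f_{\min},f_{\max}]$ is attained at an endpoint. Starting from any optimal score vector, I would push the coordinates to endpoints one at a time: for each $i$, holding the others fixed, replace $f_i$ by whichever of $f_{\min},f_{\max}$ does not increase $U$ (either works when $u_iB-A=0$). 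Every such replacement leaves the objective at its optimal value, and a coordinate once set to an endpoint is never touched again, so after $n$ steps I obtain an optimal score vector with each $f_i\in\{f_{\min},f_{\max}\}$.

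Finally I would combine this with Lemma~\ref{lem:sort}. Applying that lemma to the two-valued optimal vector just produced yields a permutation under which the scores become nonincreasing while remaining optimal; permuting does not change the multiset of values, so the scores still lie in $\{f_{\min},f_{\max}\}$. A nonincreasing sequence valued in a two-element set is exactly a threshold sequence, equal to $f_{\max}$ on an initial block and $f_{\min}$ afterward, which is the claimed form. That the threshold $j$ may be taken in $\{1,\dots,n-1\}=N-\{n\}$ follows from the sign of $u_iB-A=\sum_{r\ne i}f_r(u_i-u_r)$: for $i=n$ this is nonnegative (since $u_n\ge u_r$ for all $r$), so $U$ is nondecreasing in $f_n$ and I may take $f(x_n)=f_{\min}$, forcing $j\le n-1$; for $i=1$ it is nonpositive, so I may take $f(x_1)=f_{\max}$, forcing $j\ge 1$.

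I expect the only real care to lie in ordering the two reductions so they are compatible: the clean route is to push to endpoints \emph{first} and sort \emph{afterward}, because once the scores are two-valued, sorting them into nonincreasing order produces the threshold automatically with no further case analysis. The monotonicity argument itself, and the verification that the endpoint pushes preserve optimality, are routine; the substance is recognizing the linear-fractional structure of $U$ in each coordinate.
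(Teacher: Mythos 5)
Your proof is correct and follows essentially the same route as the paper's: the sign of the change in $U$ when a single target's score is varied does not depend on that score (your linear-fractional derivative $\frac{u_iB-A}{(f_i+B)^2}$ is the paper's difference computation $(f(x_i)-f(\tilde x_i))\sum_{j\neq i}f(x_j)(u_i-u_j)$ in disguise), so each score can be pushed to an extreme while preserving optimality, after which Lemma~\ref{lem:sort} yields the threshold structure. Your write-up is in fact slightly more careful on two points the paper glosses over --- performing all endpoint pushes before a single final sort, and using the signs at $i=1$ and $i=n$ to justify that the threshold $j$ can be taken in $\{1,\dots,n-1\}$ --- while your intermediate-value-theorem remark is superfluous (and not literally true for discrete features) but never load-bearing.
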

\begin{proof}
Let $x$ be an optimal observable feature configuration. Fix a target $i \in N$. Consider an alternative configuration $\tilde x_i$ for target $i$, while keeping features of other targets unchanged. We have
\[
\begin{split}
    &\frac{f(x_i) u_i + \sum_{j \neq i} f(x_j) u_j}{f(x_i) + \sum_{j \neq i} f(x_j)} - \frac{f(\tilde x_i) u_i + \sum_{j \neq i} f(x_j) u_j}{f(\tilde x_i) + \sum_{j \neq i} f(x_j)}\\
    &\propto \left( \left(f(x_i) - f(\tilde x_i)\right) \left(\sum_{j \neq i} f(x_j) (u_i - u_j) \right) \right)\\
\end{split}
\]
Depending on its sign, we could improve the solution $x$ by making $f(\tilde x_i)$ greater or smaller than $f(x_i)$, and obviously we should take it to extreme by setting $f(\tilde x_i) = \max_{x_i'} f(x_i')$ or $f(\tilde x_i) = \min_{x_i'} f(x_i')$. Since we assumed $x$ is optimal, then we know $\tilde x = (\tilde x_i, x_{-i})$ is also optimal, with $f(\hat x_i)$ at an extreme value. By Lemma~\ref{lem:sort}, we could permute the features in $\tilde x$ so that the scores are in decreasing order. After applying the above argument repeatedly, the score of each target achieves either the maximum or minimum score possible. Therefore, there exists some $j \in N-\{n\}$, such that 
\[
\begin{split}
&\max_{x_i'} f(x_i') = f(x_1) = \cdots = f(x_j) \\
&\geq f(x_{j+1}) = \cdots = f(x_n) = \min_{x_i'} f(x_i')
\end{split}
\]
\end{proof}

\begin{theorem} \label{thm:easiest}
The optimal feature configuration can be found in $O(n\log n + m)$ time.
\end{theorem}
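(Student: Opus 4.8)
The plan is to turn Lemmas~\ref{lem:sort} and~\ref{lem:extreme} into a concrete search over a single threshold parameter and then bound the cost of that search. Because there are no feasibility constraints, the feature domain is identical for every target, so the score-maximizing and score-minimizing feature vectors do not depend on $i$: to maximize $f(x_i)=\exp(\sum_k w_k x_{ik})$ we set $x_{ik}=1$ whenever $w_k>0$ and $x_{ik}=0$ otherwise, and we flip every coordinate to obtain the minimizer. Computing these two vectors together with the two scalars $f_{\max}$ and $f_{\min}$ therefore takes $O(m)$ time. Lemma~\ref{lem:extreme} guarantees that some optimal configuration assigns $f_{\max}$ to the targets with the smallest losses and $f_{\min}$ to the rest, so after sorting the targets by loss the entire search space collapses to the $n+1$ candidate configurations indexed by a threshold $j\in\{0,1,\dots,n\}$, where targets $1,\dots,j$ receive score $f_{\max}$ and targets $j+1,\dots,n$ receive score $f_{\min}$.

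Given this reduction, I would (i) sort the losses in ascending order in $O(n\log n)$ time, as in Lemma~\ref{lem:sort}; (ii) precompute the prefix sums $S_j=\sum_{i=1}^{j}u_i$ in $O(n)$ time; and (iii) for each threshold $j$ evaluate the defender's expected loss in closed form,
\[
U(j)=\frac{f_{\max}\,S_j+f_{\min}\,(S_n-S_j)}{f_{\max}\,j+f_{\min}\,(n-j)},
\]
returning the configuration attaining the minimum. With the prefix sums in hand each $U(j)$ costs $O(1)$, so sweeping over all $n+1$ thresholds is $O(n)$. Adding the $O(m)$ to build the extreme vectors and the $O(n\log n)$ sort yields the claimed $O(n\log n+m)$ bound, and correctness follows because Lemma~\ref{lem:extreme} certifies that one of the enumerated thresholds is globally optimal.

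The only real subtlety — and the step I would be most careful about — is the $O(1)$-per-threshold evaluation: a naive recomputation of the numerator and denominator of $U(j)$ for each $j$ would cost $O(n)$ apiece and blow the bound up to $O(n^2)$, so the prefix-sum trick is what actually delivers the stated complexity. Two minor points deserve a sentence each. First, I must confirm the candidate set is exhaustive: although Lemma~\ref{lem:extreme} states $j\in N-\{n\}$, I include the degenerate all-minimum and all-maximum configurations ($j=0$ and $j=n$) to be safe, which costs nothing. Second, when several losses coincide the sort order is not unique, but Lemma~\ref{lem:sort} shows any tie-breaking yields an equally optimal ordering, so the threshold search remains valid.
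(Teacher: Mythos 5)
Your proposal is correct and follows essentially the same route as the paper: sort the losses, compute the score-maximizing and score-minimizing feature vectors coordinate-wise in $O(m)$ time using monotonicity of $f$, and sweep over the cut-off index from Lemma~\ref{lem:extreme} in $O(n)$ total time. The paper's proof compresses your prefix-sum argument into the phrase ``with bookkeeping,'' so your write-up simply makes that step explicit.
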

\begin{proof}
We can do an exhaustive search on the ``cut-off'' node $j$ in Lemma~\ref{lem:extreme}. With bookkeeping, the search can be done in $O(n)$ time. Since $f$ is monotone in each observable feature, the maximum and minimum score can be found in $O(m)$ time. Sorting the targets' losses takes $O(n\log n)$ time.
\end{proof}

\section{Additional Experiments}
\label{app:exp}
\begin{figure}[t]
     \subfloat[Learning, classical score function]{\includegraphics[width=0.24\columnwidth]{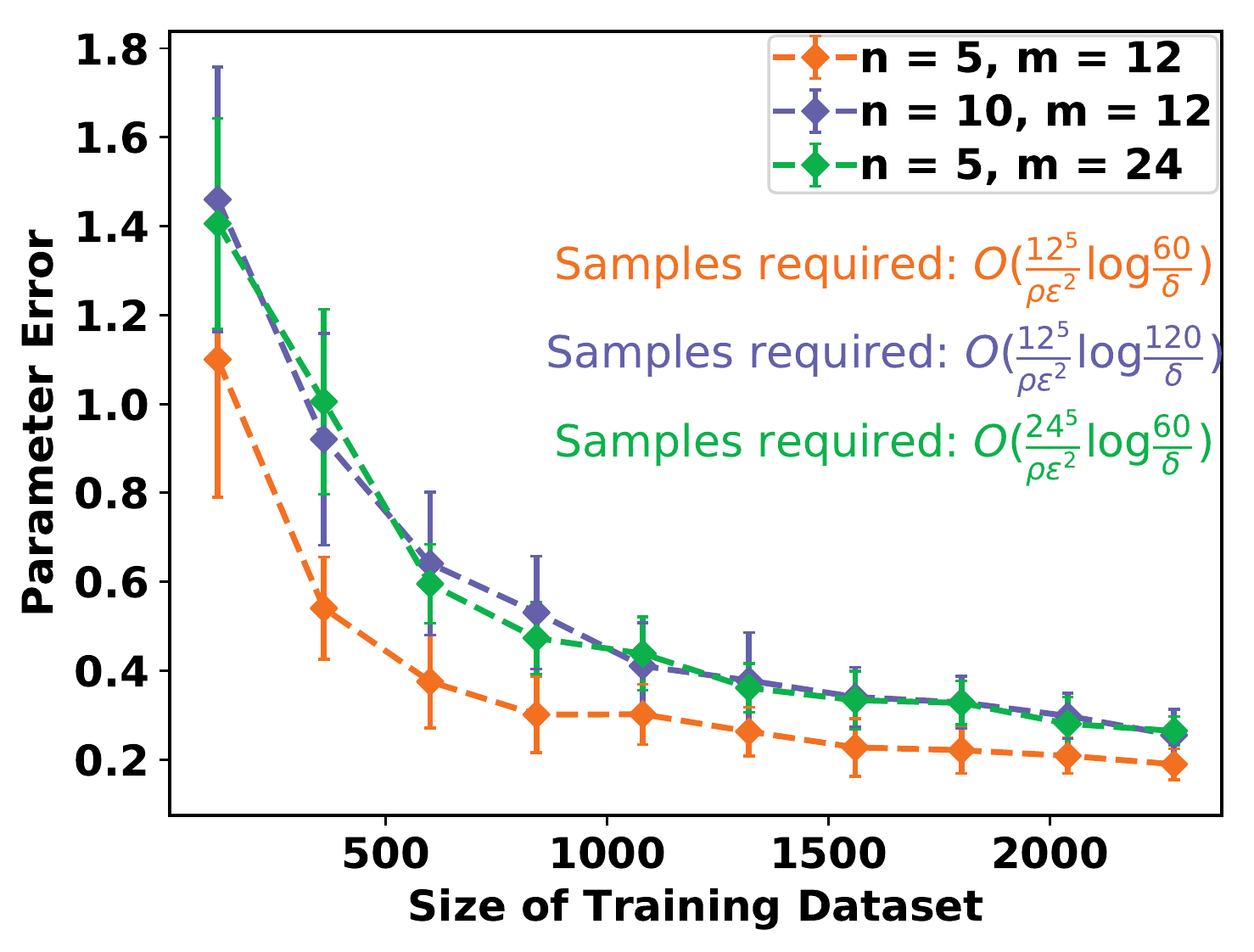} 
        \label{fig:learn_simple_param}} 
     \subfloat[Learning, NN-3 score function]{\includegraphics[width=0.24\columnwidth]{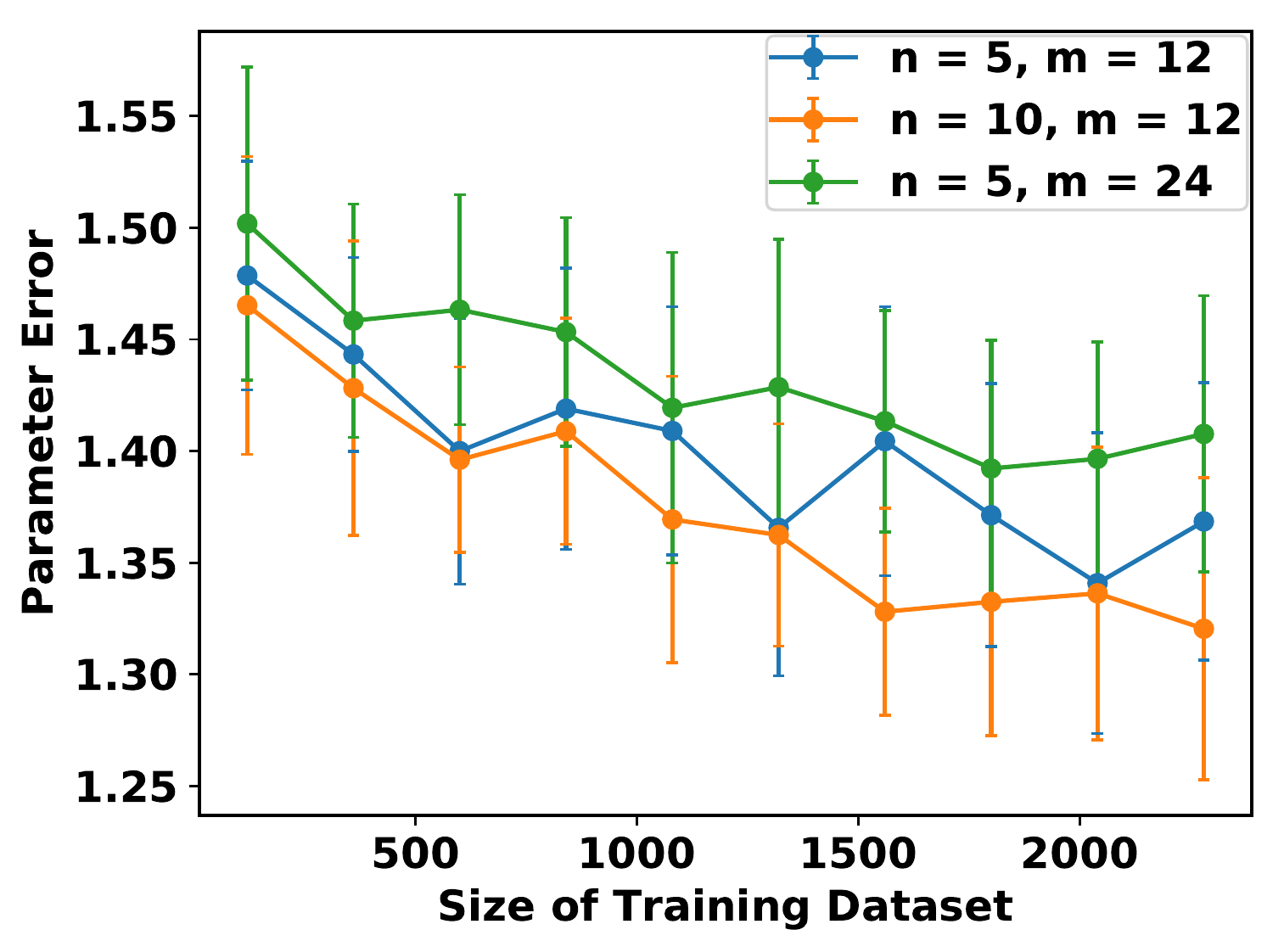}
        \label{fig:learn_complex_param}}
     \subfloat[Planning with classical score function, $n=10$]{\includegraphics[width=0.24\columnwidth]{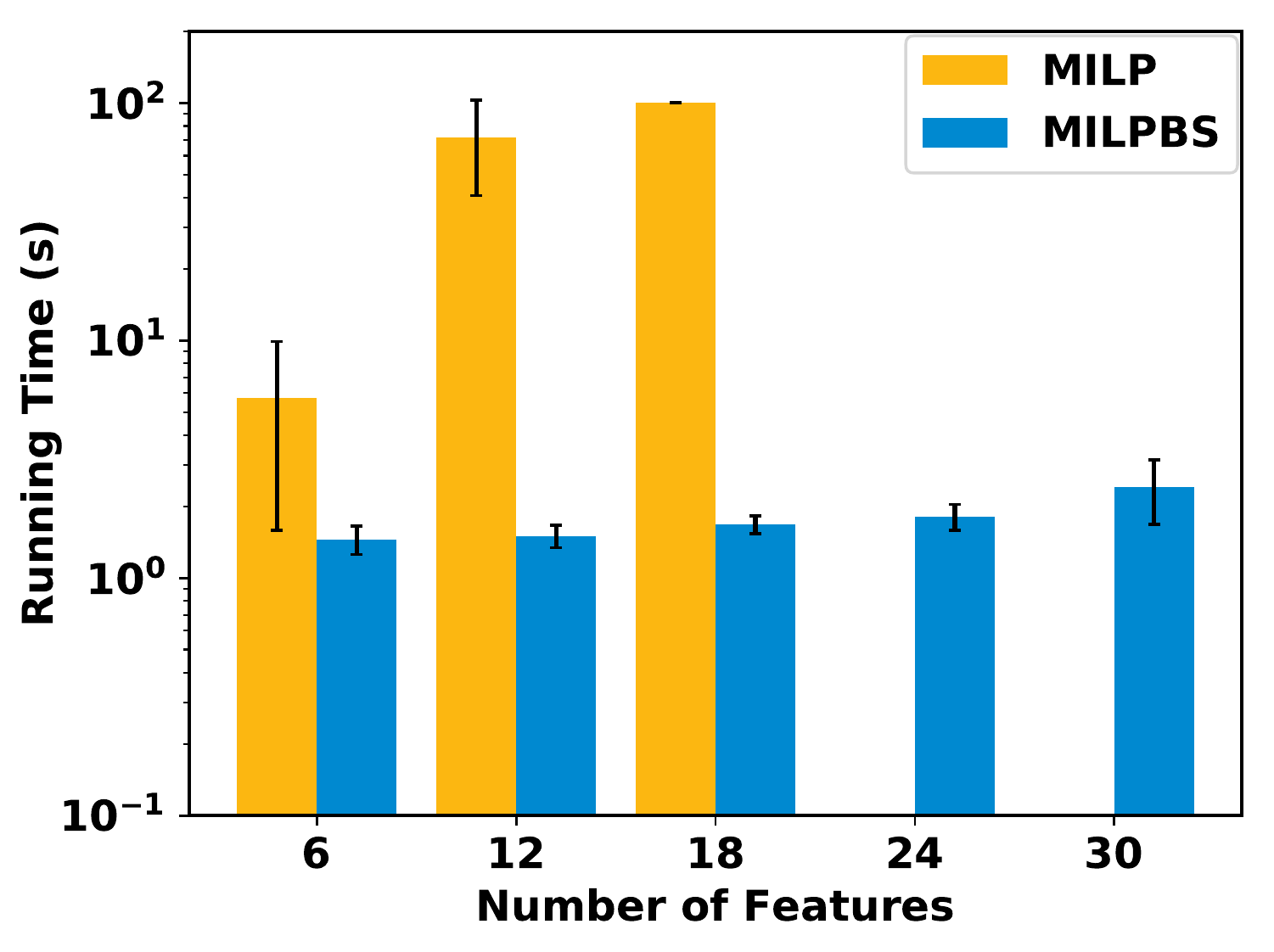}%
        \label{fig:plan_simple_general_K}}
    \subfloat[Planning with NN-3 score function, $n=10$]{\includegraphics[width=0.24\columnwidth]{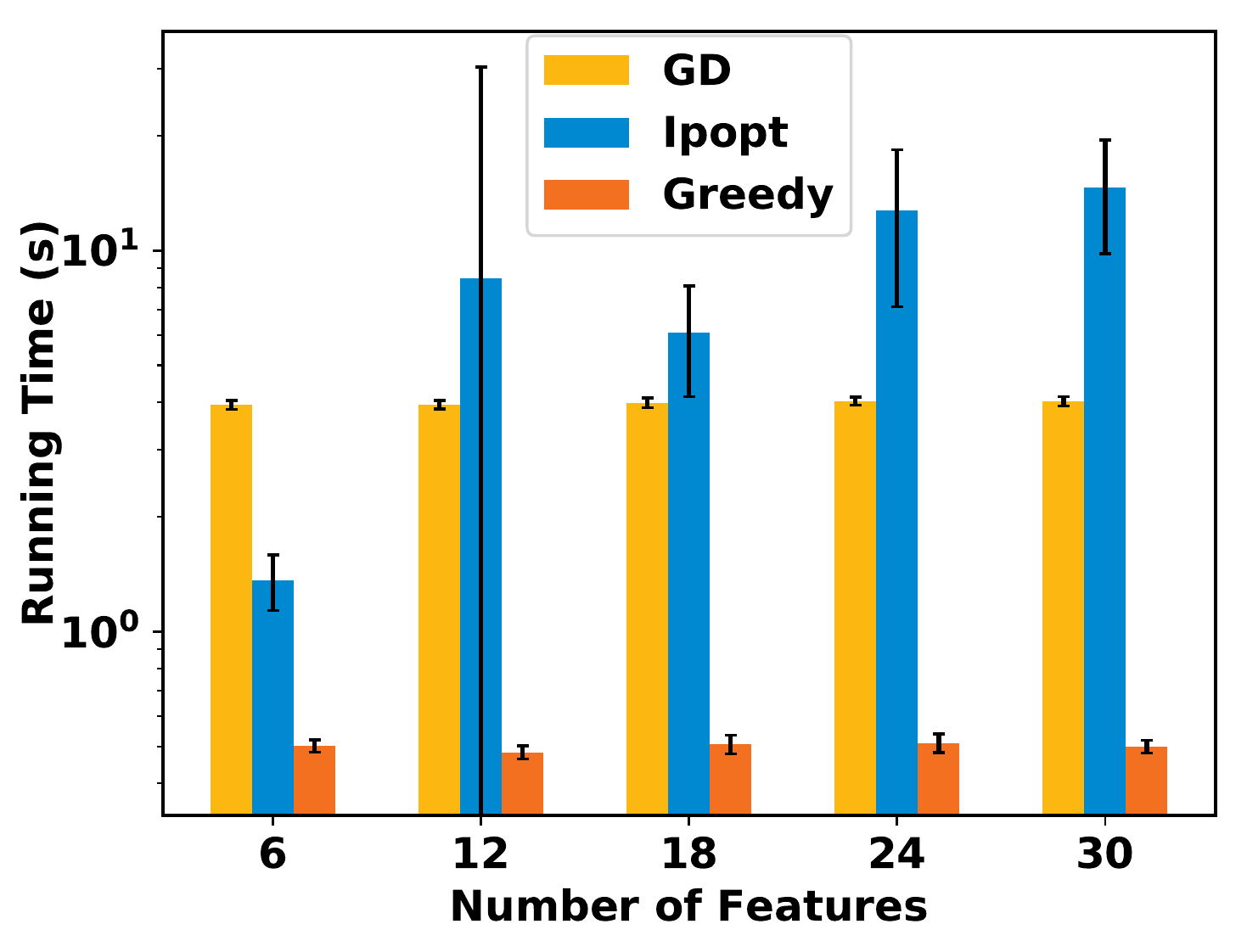}%
        \label{fig:plan_complex_runtime_K}}\\
     \subfloat[Planning with NN-3 score function, $n=10$]{\includegraphics[width=0.24\columnwidth]{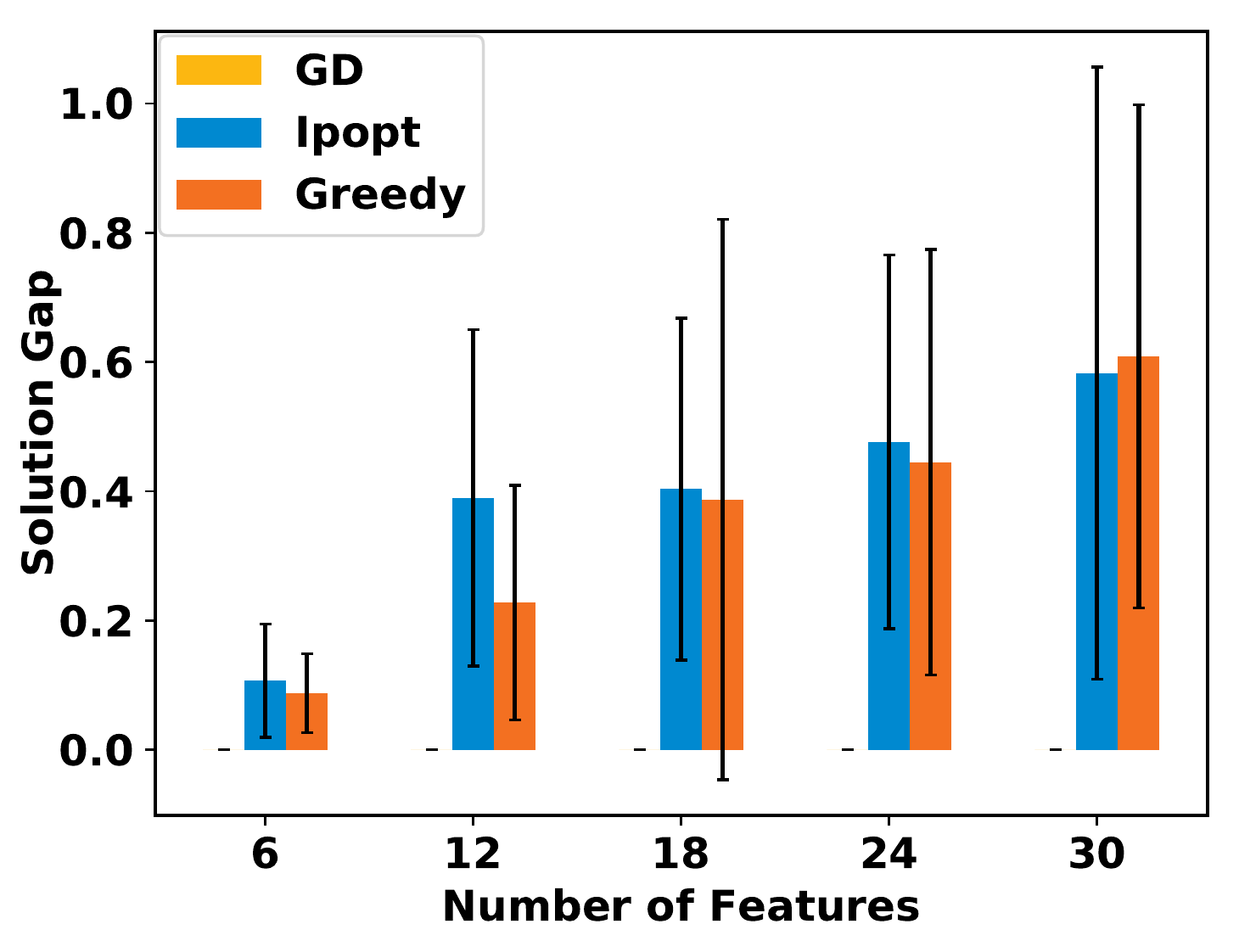}
        \label{fig:plan_complex_error_K}}
    \subfloat[Learning + Planning, NN-3 score function, $m=12$]{\includegraphics[width=0.24\columnwidth]{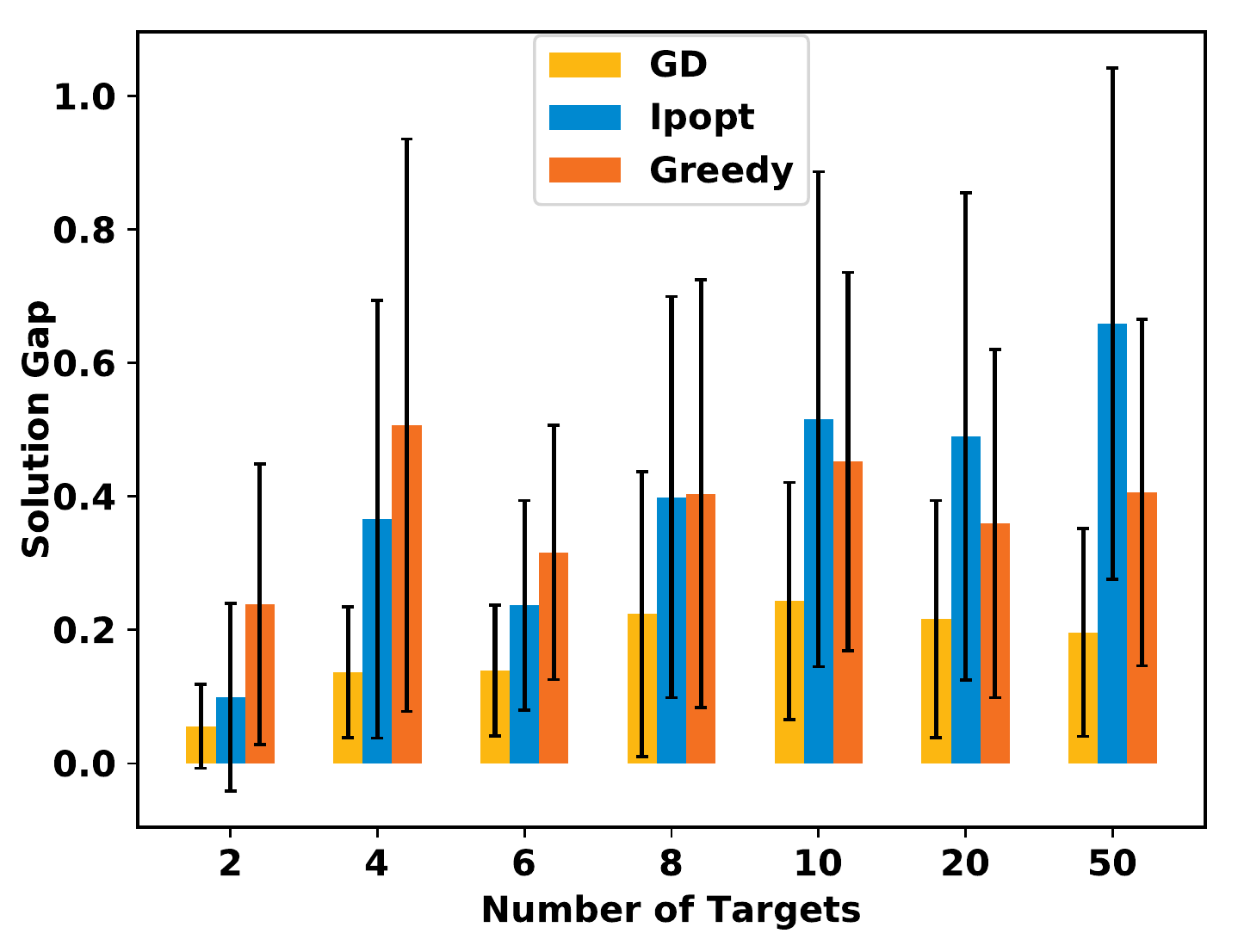}
        \label{fig:learn_plan_complex_N}}
        \subfloat[Continuous features]{\includegraphics[width=0.24\columnwidth]{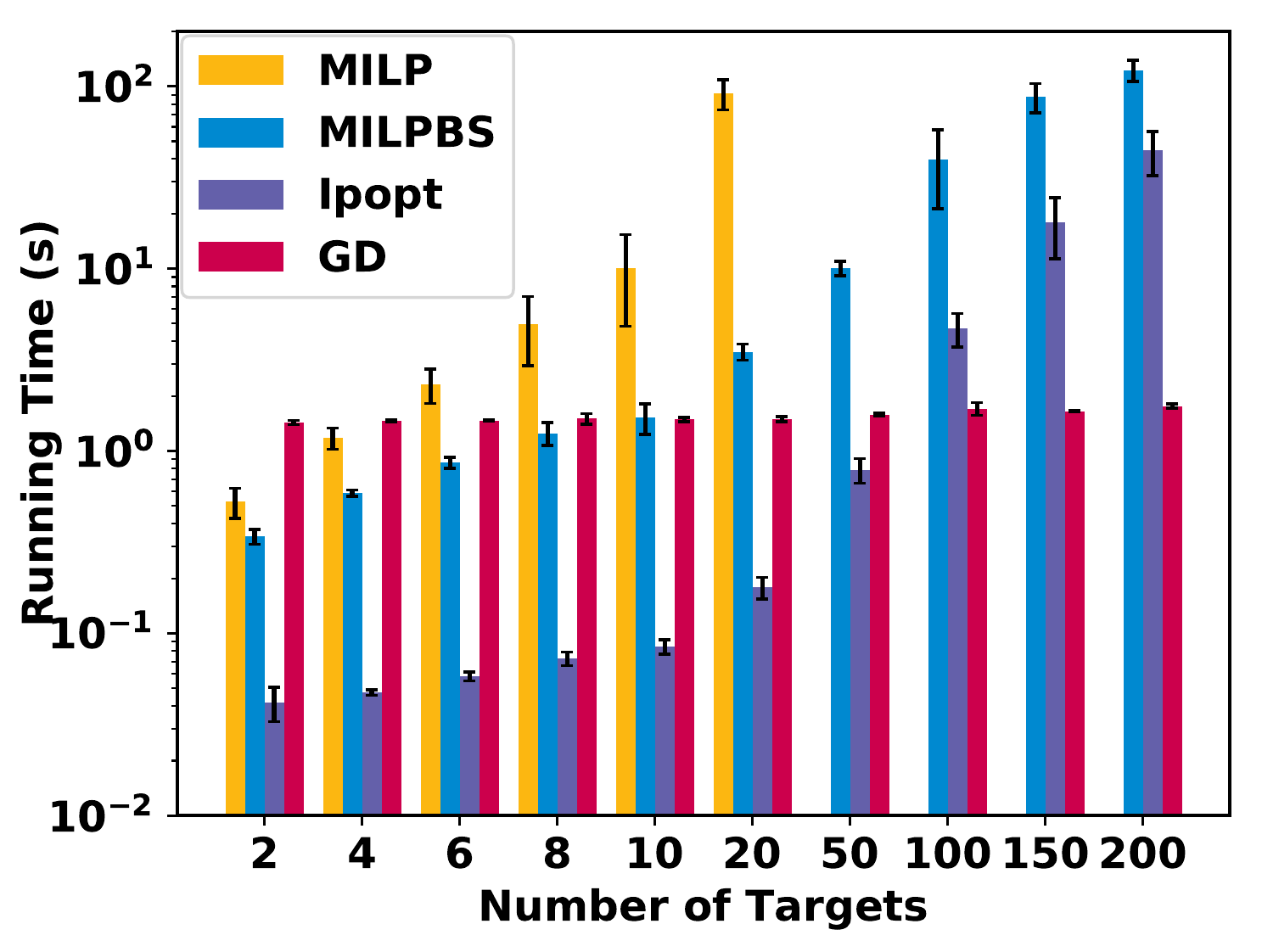}%
        \label{fig:plan_simple_continuous}} 
     \subfloat[Continuous features]{\includegraphics[width=0.24\columnwidth]{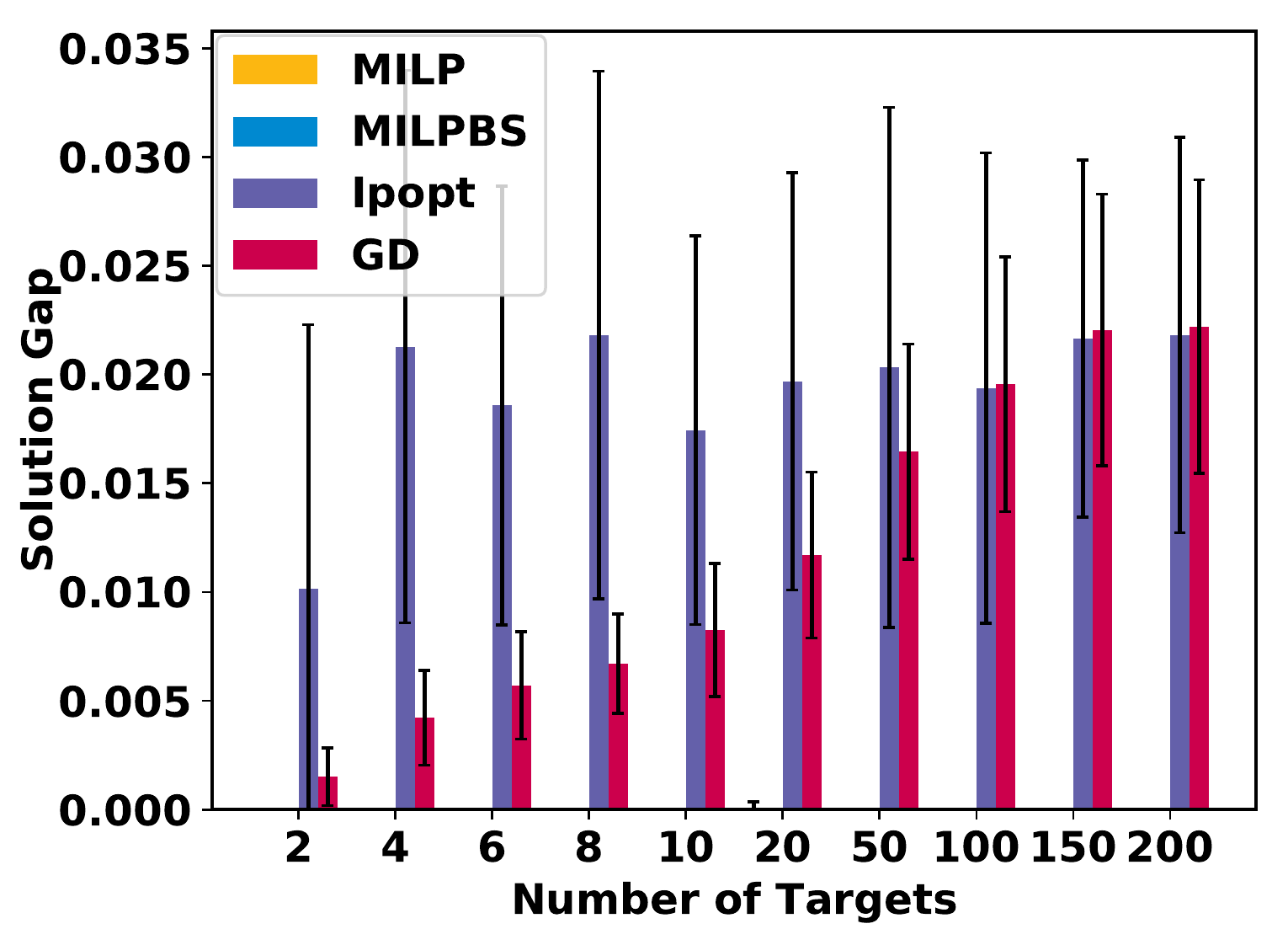}%
        \label{fig:plan_simple_continuous_gap}} \\
    \subfloat[Cost on discrete features, classical score function, $m=12$]{\includegraphics[width=0.24\columnwidth]{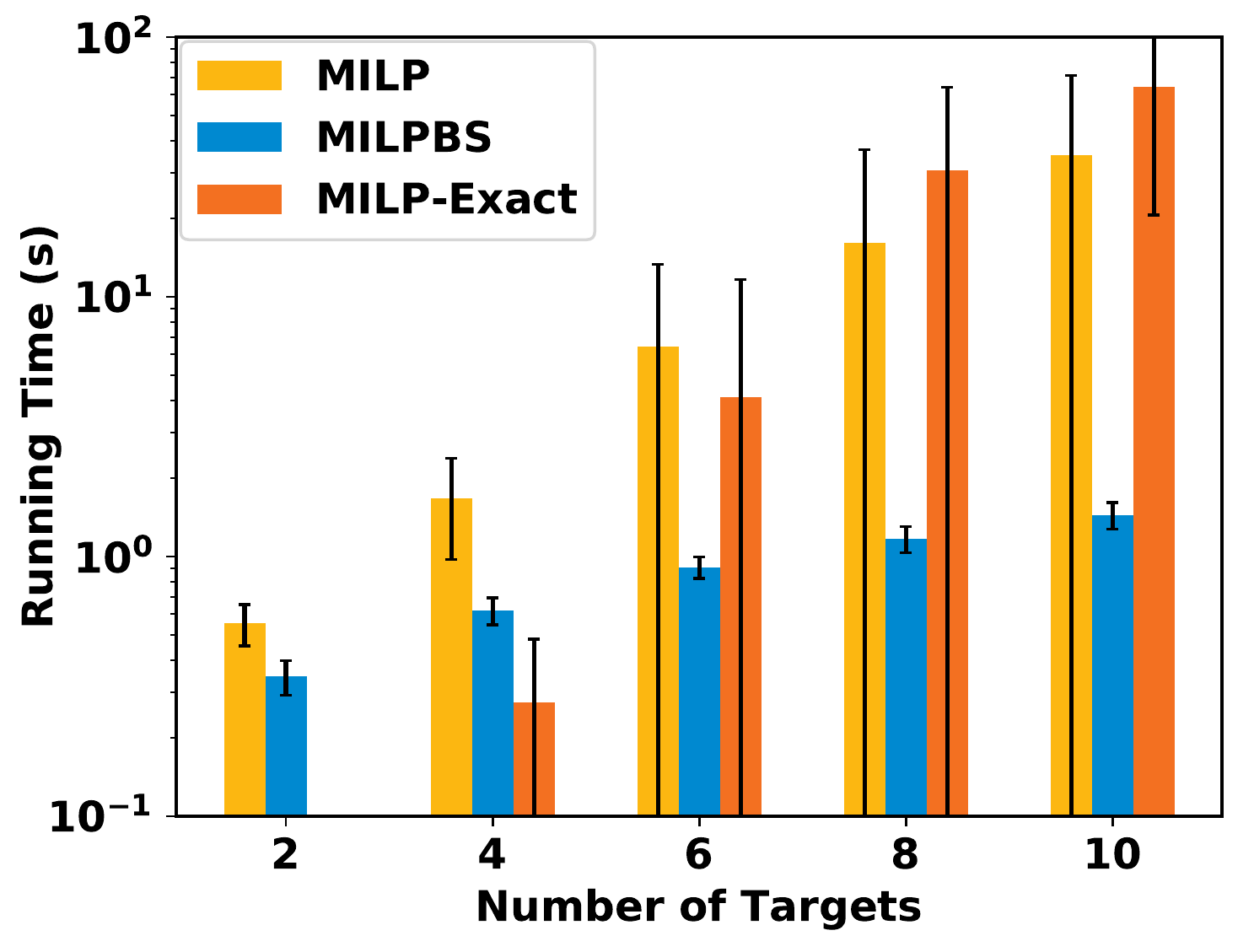}%
        \label{fig:plan_simple_cost_discrete}} 
    \subfloat[No budget and feasibility constraints, classical score function, $m=12$]{\includegraphics[width=0.24\columnwidth]{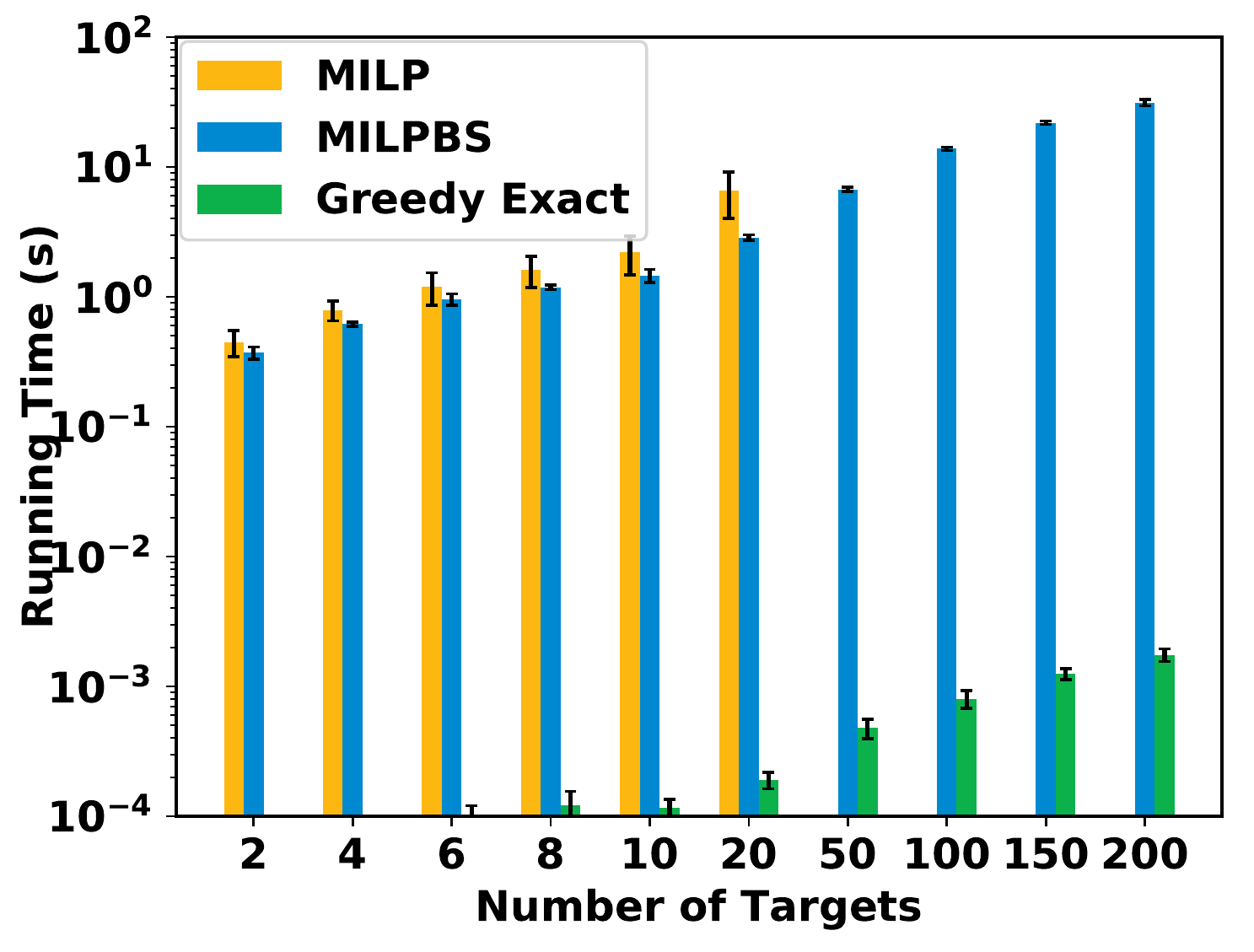}%
        \label{fig:plan_simple_nocost}} 
    \caption{Additional experimental results}
    \label{fig:all_appendix}
\end{figure}
\subsection{Experiments in the main text}

\paragraph{Learning}
In addition to the mean total variation distance reported in the main text, we present another metric to measure the performance of learning. 
We consider $|\hat \theta - \theta|$, the $L_1$ error in the score function parameter $\theta$, which directly relates to the sample complexity bound in Theorem~\ref{thm:samplecomplexity}. Since the dimension of $\theta$ depends on the number of features $k$ and other factors, we consider the $L_1$ error divided by the number of parameters and report this metric in Fig.~\ref{fig:learn_simple_param} and Fig.~\ref{fig:learn_complex_param}. 

In Fig~\ref{fig:learn_simple_param}, the $L_1$ error of CF decreases as the sample size increases. The complexity bounds in Theorem~\ref{thm:samplecomplexity} are marked in Fig~\ref{fig:learn_simple_param}. We need much fewer samples in practice to achieve a small learning error.
For NN-3 score function, the learning error is larger as shown in Fig.~\ref{fig:learn_complex_param}, even though Fig.~\ref{fig:learn_complex} in the main text shows the total variation distance is small. This suggests that the loss surface for the NN-3 score function is more complex. The solution gap in Fig.~\ref{fig:learn_plan_complex_D} is much larger than that in Fig.~\ref{fig:learn_plan_simple_D}, which can partly be explained by the fact that at the same level of TV error, the learned classical score function score function is closer to the ground truth than the NN-3 score function, and thus performs better in the learning and planning pipeline.

\paragraph{Planning}
In the main text, we showed how the number of targets effect the running time and solution gap. In 
In Fig.~\ref{fig:plan_simple_general_K}-~\ref{fig:plan_complex_error_K}, we show how the number of features affect these two metrics. For the classical score function, Fig.~\ref{fig:plan_simple_general_K} shows that the running time for MILP increases with the number of features, and MILPBS is much more scalable. 
Fig.~\ref{fig:plan_complex_runtime_K} and \ref{fig:plan_complex_error_K} show the running time and solution gap fixing $n = 10$.
The running time of GD and \textsc{Greedy} does not change much across different problem sizes, yet Ipopt runs slower than the former two on most problem instances.
GD also has smaller solution gap than Ipopt and \textsc{Greedy} on most instances.

\paragraph{Combining learning and planning}
Given enough data, Fig.~\ref{fig:learn_plan_complex_N} shows that GD can achieve a solution gap below 0.2 with as many as 50 targets.

\subsection{Experiments for the special cases}
We present the performance of our algorithms on some special cases of FDP, as studied in Appendix~\ref{app:specialcase}. 

When all features are continuous, in addition to our MILP, we may use non-convex solver or GD as a heuristic to find optimal feature configurations. Fig.~\ref{fig:plan_simple_continuous} shows that these two heuristics scale better than the approximation algorithms.
In Fig.~\ref{fig:plan_complex_error_N} and Fig.~\ref{fig:plan_complex_error_K}, we showed that GD demonstrates the best solution quality among the heuristics on NN-3 score functions. A natural question to ask is if GD is in practice close to exact. In Fig.~\ref{fig:plan_simple_continuous_gap}, we show that at least when we have a single-layer score function, GD solution is not far from optimal, though its solution deteriorates as the problem size grows. Non-convex solver yields a small solution gap as well. 

When deception cost is only associated with discrete features, we presented an exact MILP formulation in Appendix~\ref{app:specialcase:costdiscrete}. Fig.~\ref{fig:plan_simple_cost_discrete} shows that it is especially efficient on very small problems. Yet as the problem size grows its efficiency decreases quickly.

Finally, in Appendix~\ref{app:specialcase:nobudgetfeasibility}, we proposed a $O(n\log n + m)$ time algorithm for FDP without budget and feasibility constraints. Indeed, as shown in Fig.~\ref{fig:plan_simple_nocost}, the algorithm's running time is several magnitudes less than the MILP-based approaches.

\section{Experiment Parameters and Hyper-parameters}
\label{app:param}
\paragraph{NN-3 score function architecture}
The NN-3 score function has input layer of size $m \times 24$, second layer $24 \times 12$, and third layer $12 \times 1$. The first and second layers are followed by a tanh activation, and the last layer is followed by an exponential function. The neural network parameters are initialized uniformly at random in $[-0.5, 0.5]$. We use this network architecture for all of our experiments.

\paragraph{FDP parameters for classical score function}
We detail in Table~\ref{tab:distribution_simple} the parameter distributions used in the planning and combined learning and planning experiments, when the adversary assumes the single-layer score function.
These distributions apply to the results shown in Fig.~\ref{fig:plan_simple_general_N}, \ref{fig:plan_simple_general_K}, \ref{fig:learn_plan_simple_D}, \ref{fig:learn_plan_simple_N}. 
\begin{table}[t]
    \centering
    \begin{tabular}{cccc}
        \cline{1-4}
         \multicolumn{2}{c}{Discrete feature $k \in M_d$} & \multicolumn{2}{c}{Continuous feature $k \in M_c$}\\
        \cline{1-4}
        Variable & Distribution & Variable & Distribution\\
        \cline{1-4}
         $|M_d|$ & $2m/3$ & $|M_c|$ & $m/3$\\
         $\eta_{ik}$ & $U(-3, 3)$ & $\eta_{ik}$ & $U(0, 3)$\\
         $\tau_{ik}$ & N/A & $\tau_{ik}$ & $U(0, 0.25)$\\
         $\hat x_{ik}$ & $U\{0, 1\}$ & $\hat x_{ik}$ & $U(0, 1)$\\
         $u_i$ & \multicolumn{3}{c} {$U(0,1)$} \\
         \cline{1-4}
         Variable & \multicolumn{3}{c} {Distribution} \\
         \cline{1-4}
        $B$ &\multicolumn{3}{c} {$U(0, 0.2 C_{\max})$}\\
        $C_{\max}$ &\multicolumn{3}{c} {$\sum\limits_{i\in N} \sum\limits_{k \in M_c} \eta_{ik} \min(\hat x_{ik}, 1-\hat x_{ik}, \tau_{ik}) + \sum\limits_{k \in M_d} \eta_{ik}$}\\
        \hline
    \end{tabular}
    \caption{FDP parameter distributions for experiments on classical attacker score function. Used in Fig.~\ref{fig:plan_simple_general_N}, \ref{fig:plan_simple_general_K}, \ref{fig:learn_plan_simple_D}, \ref{fig:learn_plan_simple_N}} \label{tab:distribution_simple}
\end{table}

\paragraph{FDP parameters for NN-3 score function}
We detail in Table~\ref{tab:distribution_complex} the parameter distributions used in the planning and combined learning and planning experiments, when the adversary assumes the NN-3 score function. These distributions apply to the results shown in Fig.~\ref{fig:plan_complex_runtime_N},\ref{fig:plan_complex_error_N}, \ref{fig:plan_complex_runtime_K}, \ref{fig:plan_complex_error_K},\ref{fig:learn_plan_complex_D}, \ref{fig:learn_plan_complex_N}. 

\paragraph{Hyper-parameters for learning}
Table~\ref{tab:hyperparam_learning} shows the hyper-parameters we used in learning the attacker's score function $f$.

\begin{table}[t]
\begin{center}
    \begin{tabular}{ l  l }
    \hline
    Variable & Distribution \\ \hline
    $\eta_{ik}$ & $U(0, 1)$\\ 
    $\tau_{ik}$ & $1$\\ 
    $\hat x_{ik}$ &  $U(0,1)$\\ 
    $u_i$ & $U(0, 1)$\\ 
    $B$ & $U(0, 0.2nm)$\\ \hline
    \end{tabular}
\end{center}
\caption{FDP parameter distributions for experiments on NN-3 attacker score function. Used in Fig.~\ref{fig:plan_complex_runtime_N},\ref{fig:plan_complex_error_N}, \ref{fig:plan_complex_runtime_K}, \ref{fig:plan_complex_error_K},\ref{fig:learn_plan_complex_D}, \ref{fig:learn_plan_complex_N}} \label{tab:distribution_complex}
\end{table}

\begin{table}
\begin{center}
    \begin{tabular}{ l  l l l}
    \hline
    Parameter  & Fig~\ref{fig:learn_plan_complex_D} ($|D_{train}| >10000$),~\ref{fig:learn_plan_complex_N} & Fig.~\ref{fig:learn_plan_simple_N} & All other experiments\\ \hline
    Learning rate  & $\{\text{1e-3, 1e-2, 1e-1}\} \to \text{1e-1}$ & $\{\text{1e-3, 1e-2, 1e-1}\} \to \text{1e-1}$ & $\{\text{1e-3, 1e-2, 1e-1}\} \to \text{1e-1}$\\ 
    Number of epochs  & $\{\text{20, 30, 60}\} \to 30$ & $\{\text{20, 30, 60}\} \to 30$ & $\{\text{10, 20, 40}\} \to 20$\\ 
    Steps per epoch  & $\{\text{20, 30, 40}\} \to 30$ & 12 &  $\{\text{10, 20}\} \to 10$\\ 
    Batch size  & $\{\text{120, 600, 5000}\} \to 5000$ & $\{\text{120, 600, 5000}\} \to 5000$ & $|D_{train}|/$Number of epochs\\ \hline
    \end{tabular}
\end{center}
\caption{Hyper-parameters for the experiments. The values between the braces are the ones we tested. The values after the arrows are the ones we used in generating the results.} \label{tab:hyperparam_learning}
\end{table}

\end{document}